\documentclass[%
reprint, 
 amsmath,amssymb,
 aps, physrev,
]{revtex4-2}

\usepackage{graphicx}% Include figure files
\usepackage{multirow}
\usepackage{dcolumn}% Align table columns on decimal point
\usepackage{bm}% bold math
\usepackage{float}
\usepackage{subfig}
\usepackage{comment}
\usepackage{amssymb}
\usepackage{amsthm}
\usepackage{amsmath}
\usepackage[colorlinks]{hyperref}% add hypertext capabilities
\hypersetup{linkcolor = blue, citecolor=green}
\newcommand\norm[1]{\left\lVert#1\right\rVert}

\newtheorem{theorem}{Theorem}

\usepackage{color}
\begin{document}

\preprint{APS/123-QED}

\title{Machine Learning Hamiltonian Dynamical Systems with Sparse and Noisy Data 
}% 

\author{Vedanta Thapar}
\email{vedanta.thapar@maths.ox.ac.uk}

\affiliation{The Mathematical Institute, University of Oxford, Oxford OX2 6GG, United Kingdom}

\author{Abhinav Gupta}%
\email{Contact author: abhinav.gupta@ststephens.edu}
\affiliation{Center for Theoretical Physics, St. Stephen's College, University of Delhi, Delhi 110007, India}%

\date{\today}% It is always \today, today,
             %  but any date may be explicitly specified

\begin{abstract}
Machine learning has become a powerful tool for discovering governing laws of dynamical systems from data. However, most existing approaches degrade severely when observations are sparse, noisy, or irregularly sampled. In this work, we address the problem of learning symbolic representations of nonlinear Hamiltonian dynamical systems under extreme data scarcity by explicitly incorporating physical structure into the learning architecture. We introduce Adaptable Symplectic Recurrent Neural Networks (ASRNNs), a parameter-cognizant, structure-preserving model that combines Hamiltonian learning with symplectic recurrent integration, avoiding time derivative estimation, and enabling stable learning under noise. We demonstrate that ASRNNs can accurately predict long-term dynamics even when each training trajectory consists of only two irregularly spaced time points, possibly corrupted by correlated noise. Leveraging ASRNNs as structure-preserving data generators, we further enable symbolic discovery using independent regression methods (SINDy and PySR), recovering exact symbolic equations for polynomial systems and consistent polynomial approximations for non-polynomial Hamiltonians. Our results show that such architectures can provide a robust pathway to interpretable discovery of Hamiltonian dynamics from sparse and noisy data.

\end{abstract}

\maketitle

%\tableofcontents

\section{Introduction} \label{Intro}

Since the 1990s, the science of machine learning has garnered great interest in the natural sciences community and with its recent rapid advancements, it has emerged as a powerful tool for artificial scientific discovery with applications ranging from modelling multiscale physical systems to complex biological processes \cite{angelis2023artificial, makke2024interpretable, wang2023scientific, xu2021artificial}. At the largest scales, ML architectures such as AlphaFold \cite{alphafold} and GNoMe \cite{merchant2023scaling} are being used for scientific breakthroughs that have the potential of revolutionising the fields of drug discovery and materials science respectively. There have also been impressive advancements in studying complex dynamical phenomena using neural networks such as in modelling nuclear fusion processes \cite{arhouni2025artificial, pavone2023machine}, climate and weather science \cite{bracco2024machine, eyring2024pushing, lai2024machine, graphcast}, and the acceleration of fluid and molecular dynamics simulations \cite{brunton2024promising, kochkov2021machine, pravsnikar2024machine, vinuesa2022enhancing} among many others. Many of these approaches fall under the umbrella of physics-informed neural networks (PINNs), which explicitly incorporate prior physical knowledge into the learning process \cite{PINNsReview, toscano2024pinnspikansrecentadvances}. \\

A particularly important class of systems in this context is Hamiltonian dynamical systems, which exhibit rich behaviour including periodic, quasiperiodic, and chaotic dynamics \cite{goldstein2011classical}. Because their evolution is constrained by Hamilton’s equations, learning such systems while respecting their underlying structure has received considerable attention \cite{PINNsReview}. Hamiltonian Neural Networks (HNNs) \cite{greydanus2019hamiltonian} were amongst the first architectures to explicitly encode this structure and have been shown to outperform vanilla neural networks, especially in chaotic regimes \cite{orderandchaos}. Subsequent extensions include parameter-cognizant Adaptable HNNs \cite{Han_Glaz_Haile_Lai_2021}, symplectic recurrent architectures employing structure-preserving integrators (SRNNs) \cite{Chen2020Symplectic}, and further developments addressing inseparable Hamiltonians \cite{xiong2022nonseparablesymplecticneuralnetworks}, noncanonical coordinates \cite{cranmer2020lagrangianneuralnetworks, choudhary2021forecasting}, constraints \cite{contraints}, and dissipation \cite{sosanya2022dissipative}. Alternative structure-preserving approaches, such as SympNets \cite{SympNets}, aim to learn symplectic maps directly, and recent work has unified many of these methods under the framework of Generalised Hamiltonian Neural Networks (GHNNs) \cite{HORN2025113536}. \\

Despite their predictive success, most Hamiltonian learning architectures remain largely black-box models, providing blurred insight into the underlying governing laws. This has motivated growing interest in interpretable machine learning approaches, particularly symbolic regression methods aimed at recovering explicit equations of motion from data \cite{allen2023interpretable, makke2024interpretable}. Prominent examples employed in this work include SINDy \cite{brunton2016discovering} which targets sparse representations of nonlinear ODEs via regression over a fixed function library, and the more general PySR framework \cite{cranmer2023interpretable}, an evolutionary symbolic regression method that does not require a predefined basis.
Learning equations for Hamiltonian systems using physical priors has also been explored previously, for example in \cite{SSINs}. However, in the interest of interpretability, that approach avoids deep neural networks and instead combines SINDy with symplectic integration, limiting expressibility to the chosen basis and precluding parameter adaptability. Other symbolic regression approaches include probabilistic and Bayesian equation discovery methods \cite{StochasticComplexSYstems, BayesianScientist}, as well as physics-specific ``artificial scientist'' models such as SciNet, AIFeynman, AIPoincare, and AINewton \cite{SciNet, AIFeynman, AIPoincare, AINewton}. Importantly, these methods typically degrade severely when data are sparse, irregularly sampled, or noisy---conditions common in realistic experimental and observational settings \cite{hsin2024symbolicregressionsparsenoisy}. \\

In this work we address the following question: given a parametrised dynamical system and access to only very sparse, irregular, and noisy trajectory data for a limited set of parameter values, can one recover symbolic representations of the underlying Hamiltonian dynamics by incorporating appropriate physical priors? We propose Adaptable Symplectic Recurrent Neural Networks (ASRNNs), which combine parameter-cognizant Hamiltonian learning with symplectic recurrent integration focused on separable Hamiltonian systems. This allows the generation of regularly spaced trajectories for unseen parameter values, providing suitable data for symbolic discovery via PySINDy \cite{brunton2016discovering} and PySR \cite{cranmer2023interpretable}. This provides a modular approach that improves downstream interpretability without reducing expressibility. \\
We evaluate ASRNNs constructed using small multilayer perceptrons (MLPs) on two representative systems: a two-parameter Henon–Heiles system and the single parameter Morse potential. While both systems have been considered previously in the context of Adaptable HNNs \cite{Han_Glaz_Haile_Lai_2021}, prior approaches assumed regularly sampled, noise-free data and struggled in multi-parameter settings. In contrast, ASRNNs trained on highly sparse and irregular observations generalise well across parameter space and, when used with symbolic regression, recover equations of motion and Hamiltonian structure despite severe data limitations. We also consider the ability of these architectures to extrapolate to parameter regimes with qualitatively different dynamics not included in the training set. \\

The rest of the paper is organised as follows. The following section introduces the general structure of ASRNNs, Section \ref{HH} introduces the physical systems studied, Section \ref{TD} describes data generation and training procedures, Section \ref{results} presents empirical results, Section \ref{theory} constructs a theoretical analysis of the ASRNN loss gradient under noise and Section \ref{conclusion} concludes with a discussion of limitations and future directions.

\section{Model structure} \label{ASRNN structure}

A Hamiltonian Neural Network (HNN) \cite{greydanus2019hamiltonian} parametrises a time-independent Hamiltonian $\mathcal{H} = \mathcal{H}(\mathbf{q}, \mathbf{p})$ using a neural network $\mathcal{H}_{\theta} (\mathbf{q}, \mathbf{p})$ where $(\mathbf{q}, \mathbf{p})$ are canonical coordinates. Hamilton’s equations are enforced during training via the loss,
\begin{equation}
    \mathcal{L}_{(\mathbf{q}, \mathbf{p})} (\theta) = \norm{\frac{\partial \mathcal{H}_{\theta}}{\partial \mathbf{p}} - \frac{\partial \mathbf{q}}{\partial t}}_2 + \norm{\frac{\partial \mathcal{H}_{\theta}}{\partial \mathbf{q}} + \frac{\partial \mathbf{p}}{\partial t}}_2,
    \label{HNNloss}
\end{equation}
which requires access to time derivatives $(\dot{\mathbf{q}}, \dot{\mathbf{p}})$.\\
Adaptable HNNs extend this framework by introducing parameter channels $\lambda$, allowing Hamiltonians of the form $\mathcal{H}(\mathbf{q}, \mathbf{p}; \mathbf{\lambda})$ to be learned across parameter space \cite{Han_Glaz_Haile_Lai_2021}. However, both HNNs and AHNNs rely on derivative estimates (such as by finite differences), which are typically unreliable when data are sparse and noisy. To address this, Symplectic Recurrent Neural Networks (SRNNs) 
\cite{Chen2020Symplectic} replace derivative-based losses with recurrent integration using symplectic integrators, but are restricted to separable Hamiltonians $\mathcal{H}(\mathbf{q}, \mathbf{p}) = \mathcal{K}(\mathbf{p}) + \mathcal{V}(\mathbf{q})$ where $\mathcal{K}, \mathcal{V}$ are kinetic and potential energy functions respectively. \\
We propose ASRNNs (Adaptable Symplectic Recurrent Neural Networks) which combine parameter-cognizant Hamiltonian learning with symplectic recurrence. ASRNNs model Hamiltonians of the form
\begin{equation}
    \mathcal{H}(\mathbf{q}, \mathbf{p} ; \lambda) = \mathcal{K}(\mathbf{p}) + \mathcal{V}(\mathbf{q}; \lambda),
\end{equation}
where coordinates are scaled such that parameter dependence enters only through the potential. The model thus consists of two neural networks, i.e. $\mathcal{H}_{\theta}(\mathbf{q}, \mathbf{p}; \lambda) = \mathcal{K}_{\theta_1}(\mathbf{p}) + \mathcal{V}_{\theta_2}(\mathbf{q}; \lambda)$ where $\theta = (\theta_1, \theta_2)$. A Verlet integrator is used to perform the recurrent steps,
\begin{gather}
    \mathbf{p}\left(t+\frac{\Delta t}{2}\right) = \mathbf{p}(t) - \frac{\Delta t}{2} \frac{\partial \mathcal{V}_{\theta_2}}{\partial \mathbf{q}} \bigg|_t \\
    \mathbf{q}\left( t+\Delta t\right) = \mathbf{q}(t) + \Delta t \frac{\partial \mathcal{K}_{\theta_1}}{\partial \mathbf{p}} \bigg|_{t+\frac{\Delta t}{2}} \\
    \mathbf{p}\left( t+\Delta t\right) =  \mathbf{p}\left( t+\frac{\Delta t}{2}\right) - \frac{\Delta t}{2} \frac{\partial \mathcal{V}_{\theta_2}}{\partial \mathbf{q}} \bigg|_{t+\Delta t},
\end{gather}
where $\Delta t$ is a chosen time step. Training minimizes the trajectory-matching loss,
\begin{equation}
    \mathcal{L}_{(\mathbf{q}_0, \mathbf{p_0}; \lambda)} (\theta) = \sum_{t \in \mathcal{T}} \left(\norm{\mathbf{q}(t) - \mathbf{\hat{q}(t)}}_2 + \norm{\mathbf{p}(t) - \mathbf{\hat{p}(t)}}_2  \right),
    \label{ASRNNloss}
\end{equation}
where $(\mathbf{q}_0, \mathbf{p_0}; \lambda)$ are initial conditions, $(\mathbf{q}(t), \mathbf{p}(t))$ and $(\mathbf{\hat{q}(t)}, \mathbf{\hat{p}(t)})$ are the ASRNN outputs and ground truth values for time $t$ respectively, and $\mathcal{T}$ is the set of observed time points. The ASRNN recurrent unit is illustrated in Fig.~\ref{fig:ASRNN structure}.
\begin{figure} 
    \centering
   \includegraphics[width=0.8\linewidth]{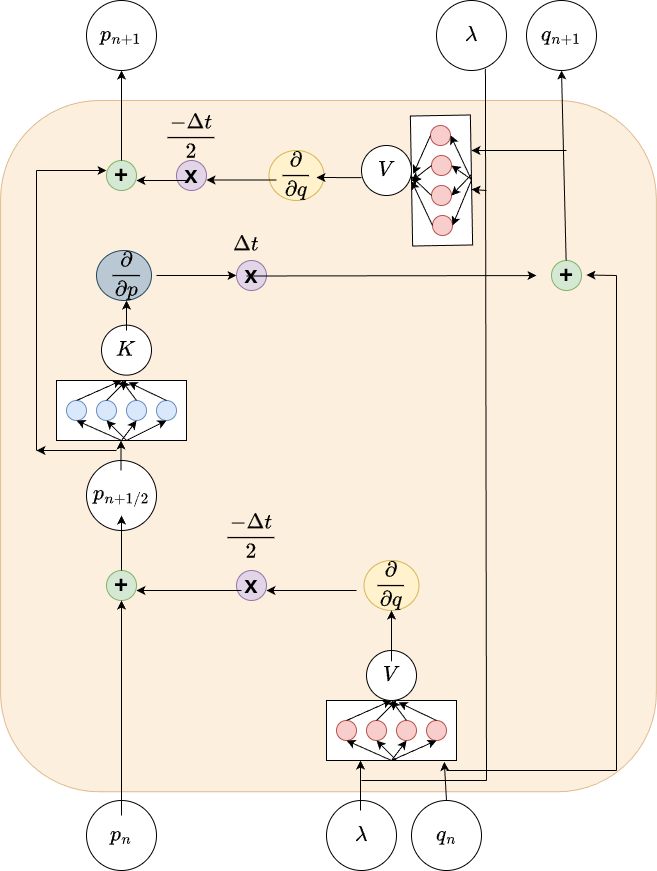}
    \caption{Recurrent unit of an ASRNN, here we take $(\mathbf{q_n}, \mathbf{p_n}; \lambda)$ as input and implement the Verlet algorithm which returns $(\mathbf{q_{n+1}}, \mathbf{p_{n+1}}; \lambda)$ where the $n$ indexes time steps separated by the chosen time step $\Delta t$.}
    \label{fig:ASRNN structure}
\end{figure}
All neural networks are implemented as small MLPs with two to three hidden layers and 30–50 neurons per layer. We employ random Gaussian weight initialisation; empirical results indicate that while initialisation does not significantly impact the performance within the training parameter regime, it can affect generalisation outside the training parameter regime---for a discussion see Appendix \ref{appendix1}.

\section{Illustrative systems} \label{HH}

To evaluate the ability of ASRNNs to learn Hamiltonian dynamics under extreme sparsity and noise, we follow previous work \cite{Han_Glaz_Haile_Lai_2021} and consider two representative systems: a polynomial, multi-parameter Hamiltonian (the Henon–Heiles system) and a non-polynomial, single-parameter system (the Morse potential).

\subsection{Henon-Heiles Hamiltonian}

The Henon-Heiles Hamiltonian is a low-dimensional system exhibiting a wide variety of dynamical behaviour including chaotic, quasiperiodic, resonant behaviour, bifurcations and periodic motion \cite{goldstein2011classical}. It is constructed in 2-dimensional configuration space (4-dimensional phase space) with two perpendicular linear harmonic oscillators coupled through an asymmetric nonlinear perturbation, it is given as
   \begin{equation}
    \mathcal{H}(\mathbf{q}, \mathbf{p}; \lambda) = \frac{p_1^2 + p_2^2}{2} + \frac{q_1^2 + q_2^2}{2} + \alpha q_1^2 q_2 - \beta \frac{q_2^3}{3},
    \label{Hamiltonian}
\end{equation} 
with equations of motion
\begin{align}
    \dot{q}_1 &= p_1, \label{eom1} \\
    \dot{q}_2 &= p_2, \\
    \dot{p}_1 &= -q_1 - 2\alpha q_1q_2, \\
    \dot{p}_2 &= -q_2 - \alpha q_1^2 + \beta q_2^2, \label{eom4}
\end{align}
where $\mathbf{q} = (q_1 \ q_2)^T$, $\mathbf{p} = (p_1 \ p_2)^T$ and $\lambda = (\alpha, \beta)$. Although often studied in the single parameter regime $\alpha=\beta$, we consider the full two parameter formulation. It is well established that for $\beta = \alpha \in (0, 1]$, the potential has a finite escape energy : orbits for $\mathcal{H}\leq\frac{1}{6}$ remain bounded for all initial conditions. 
Increasing either the energy or the parameter values leads to progressive destruction of invariant tori and the onset of chaos---for high parameters and energies close to the escape energies most orbits are chaotic.

Thus the system displays enough diversity, complexity and instability in its dynamics to serve as a stringent example for evaluating robustness and generalisation. We train ASRNNs on a sparse subset of parameter values with  $\alpha, \beta \in [0.2, 0.8]$ and assess performance across the full square $[0.2, 0.8]^2 \subset \mathbb{R}^2$.

\subsection{Morse potential}

The Hamiltonian for this system is given as $\mathcal{H} = \frac{p^2}{2} + \mathcal{V}(q)$ where
\begin{equation}
    \mathcal{V}(q; \alpha) = (1-e^{-\alpha (q-1)})^2 -1
\end{equation}
This potential is chosen as an example to study a system with a non-polynomial potential function, further, it is also interesting as it exhibits strongly asymmetric dynamics: on one side of the potential, trajectories experience an abrupt momentum reversal akin to a `rigid bounce' due to the steep exponential rise (which effectively acts as a hard wall). On the other side the force decays rapidly to 0. The Morse potential therefore provides a challenging test for learning sharp, non-polynomial features from sparse and noisy data. \\
It is hence interesting to study if the ASRNN, trained on very sparse (and possibly noisy) data, would capture this behaviour for unseen values of the parameter $\alpha$.

\section{Training data generation} \label{TD}

To evaluate robustness to noise in detail, we corrupt training data using temporally correlated Ornstein–Uhlenbeck (OU) \cite{OUprocess} noise, which provides a more realistic model of measurement uncertainty than uncorrelated Gaussian noise.

\subsection{Trajectory generation}

Ground truth trajectories are generated by integrating Hamilton’s equations using a symplectic leapfrog (Störmer–Verlet) integrator. Initial conditions $(p_0, q_0)$ are sampled such that $H(p_0, q_0) \leq E_{\max}$ (to ensure bounded orbits). Trajectories are first generated at fine temporal resolution and then coarsened to an observation timestep $\Delta t=0.1$.

\subsection{Ornstein-Uhlenbeck noise model}

The OU process \cite{OUprocess} is defined by
\begin{equation}
d\eta = -\phi \eta dt + \sigma dW,
\end{equation}
where $\phi > 0$ is the mean-reversion rate, $\sigma$ controls noise intensity, and $W$ denotes a standard Wiener process. The stationary variance of this process is $\sigma^2/(2\phi)$, and the autocorrelation decays exponentially with characteristic time $\tau = 1/\phi$. We parametrise noise using the correlation time $\tau$ and a noise-to-signal ratio (NSR), defined as the ratio of the noise standard deviation to the pooled within-trajectory standard deviation of the clean $(q, p)$ coordinates (which sets a natural scale). For each trajectory, we compute the variance about that trajectory's temporal mean, then average across all trajectories to obtain the signal variance against which the noise magnitude is scaled. We consider NSR values of $5\%$ and $10\%$ and correlation times $\tau \in \{\Delta t/5, \Delta t, 5 \Delta t, 25\Delta t\}$ spanning regimes from effectively i.i.d. Gaussian noise to fluctuations correlated across the full sampling window of 15 steps.\\
For discrete observations at intervals $\Delta t$, the OU process admits an exact update
\begin{equation}
    \eta_{n+1} = a \, \eta_n + \sqrt{1 - a^2} \, \sigma_\infty \, \xi_n,
\end{equation}
where $a = \exp(-\Delta t / \tau)$, $\sigma_\infty = \sigma / \sqrt{2\phi}$, and $\xi_n\sim \mathcal{N}(0, 1)$. The noisy observations are then $(\tilde{p}_n, \tilde{q}_n) = (p_n, q_n) + \eta_n$.

\subsection{Sparse sampling}

In realistic experimental conditions where continuous monitoring is impractical, observations are often irregular and sparse. To simulate this we construct sliding training windows spanning 15 time instants from the generated data, and take $N=2$ measurements from each window: the initial time $t=0$ which serves as the integration starting point, and a second observation drawn uniformly from the subsequent time steps, with a maximum separation of 14 steps. This setup represents an extreme sparsity regime, reducing each trajectory segment to an initial condition and a single future observation. 
The chosen OU correlation times span noise processes that range from short to long-correlated relative to this sampling window. For a given NSR, a well-performing model should be robust to noise at smaller values of $\tau$.

\section{Results} \label{results}

\subsection{Henon-Heiles system}

ASRNNs were implemented using MLPs for the kinetic and potential energy components. Best performance was obtained using three hidden layers with 20–50 neurons per layer; we fix 30 neurons per layer for all reported results. These networks are substantially smaller than those used in previous work (e.g. \cite{Han_Glaz_Haile_Lai_2021} employed layers of 200 neurons). The $tanh$ activation function is used throughout to ensure smoothness. The resulting architectures are $\{2, 30, 30, 30, 1\}$ for $\mathcal{K}_{\theta_1}$ and $\{4,  30, 30, 30, 1\}$ for $\mathcal{V}_{\theta_2}$ \footnote{More complex architectures were tested but did not improve performance under extreme sparsity.}.\\

Training data were generated as described in Section~\ref{TD}. For the Henon–Heiles system, we restrict training to parameter values $(\alpha, \beta) \in \{0.2, 0.4, 0.6, 0.8\}^2$ yielding $16$ parameter pairs. For each pair, $800$ sparse trajectory segments were constructed, each consisting of an initial condition and a single future observation separated by at most $14$ time steps, resulting in $9600$ training samples and $3200$ validation samples. Kinetic and potential energy networks were trained jointly in a recurrent setting by minimizing Eq.~\eqref{ASRNNloss} using the LBFGS optimiser \cite{LBFGS} for 500 epochs. We trained an ensemble of 40 models with independent random initialisations of both data slices and network weights. Typical training times were a few minutes per model on an Nvidia 4070Ti Super GPU with 16GB VRAM.

\subsubsection{Prediction} \label{prediction}

After training, ASRNNs were evaluated on trajectories generated at unseen parameter values and compared against ground truth Verlet integrations.
To ascertain the quality of our predictions we considered both the generated trajectories themselves as well as the kinetic and potential energies predicted by the networks, i.e. $\mathcal{K}_{\theta_1}$ and $\mathcal{V}_{\theta_2}$ respectively. For clarity of notation we define the following quantities to study the quality of predictions:
\begin{itemize}
    \item $\mathcal{H}_{\theta}$ : the Hamiltonian parametrised by the model, i.e. the raw output of the neural network
    \item $\mathcal{H}_{\text{pred}}$ : the true Hamiltonian (for example Eq. \ref{Hamiltonian}) evaluated on the predicted trajectories.
    \item Thus to quantify the accuracy of our predictions we can measure the difference in the energies of the predicted and ground truth trajectories, i.e. define 
\begin{equation}
    \epsilon := \left|\frac{\mathcal{H} - \mathcal{H}_{\text{pred}}}{\mathcal{H}}\right|,
    \label{DeltaH}
\end{equation}
as the fractional error.
\end{itemize}

Despite being trained on only two observations per trajectory window, ASRNNs accurately predict trajectories for several hundred time steps, including at unseen parameter values. Representative examples for
unseen parameters $(0.5, 0.7)$ are shown in Figs.~\ref{fig:57_example} and \ref{fig:57_example_withnoise} for noise-free and noisy training respectively. For a small subset of initial conditions prediction errors grow at long times, particularly outside the training parameter region consistent with the extreme sparsity of the data. However, for typical trajectories, prediction trajectories are remarkably close as measured by $\mathcal{H}_{\theta}$ which is conserved to high accuracy, with typical fluctuations of order $10^{-4}$ even under significant noise. As expected $\mathcal{H}_{\theta}$ differs from the true energy by a constant offset, which does not affect the dynamics. Fig.~\ref{fig:energyerror} summarises mean percentage fractional energy errors ($\epsilon \times 100 \%$) across parameter space: approximately $0.19\%$ (in the case without noise) increasing to $0.58\%$ and $2.6\%$ (in the case with $10\%$ NSR noise and $\tau = \Delta t/5, 25\Delta t$ respectively). Even in these regimes, errors remain substantially lower than those reported for AHNNs \cite{Han_Glaz_Haile_Lai_2021}, despite the use of much sparser, irregular and noisy data. Notably the performance is significantly weaker outside the training region $[0.2, 0.8]^2$, for a discussion on the ability of ASRNNs to extrapolate to qualitatively different dynamical regimes see Appendix \ref{appendix1}. Fig.~\ref{fig:energy_noise_HH} shows median $\epsilon$ as a function of noise for both seen and unseen parameter values. While noise increases error, the effect remains modest, apart from rare larger deviations arising primarily from outlier initial conditions.

\begin{figure} 
    \centering
    \includegraphics[width=\linewidth]{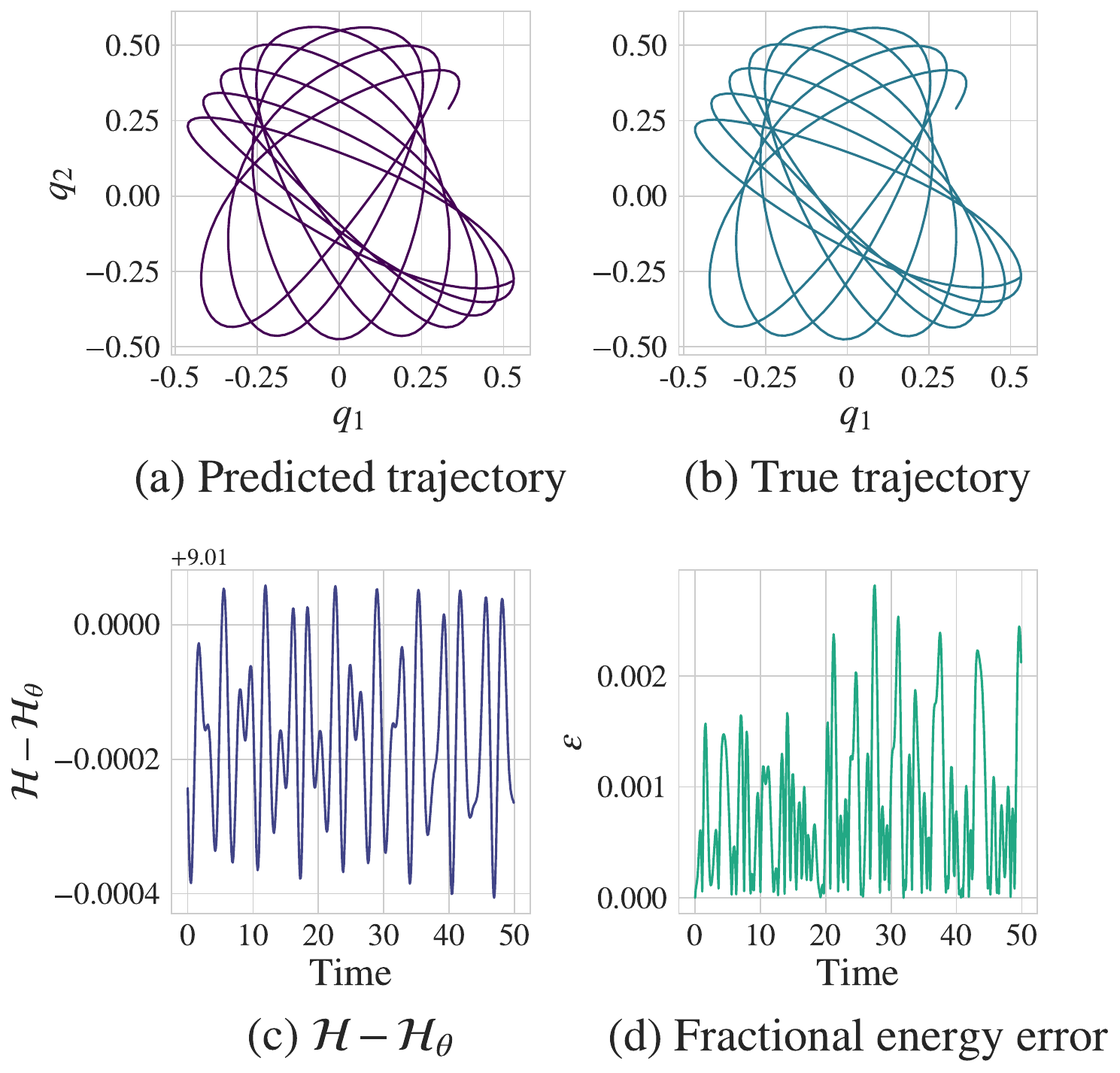}\\
    \caption{ASRNN predictions and error for a pair of unseen parameters $(0.5, 0.7)$ trained without noise. This is an example trajectory of $500$ time steps at energy $\approx 0.16$.}
    \label{fig:57_example}
\end{figure}

\begin{figure} 
    \centering
    \includegraphics[width=\linewidth]{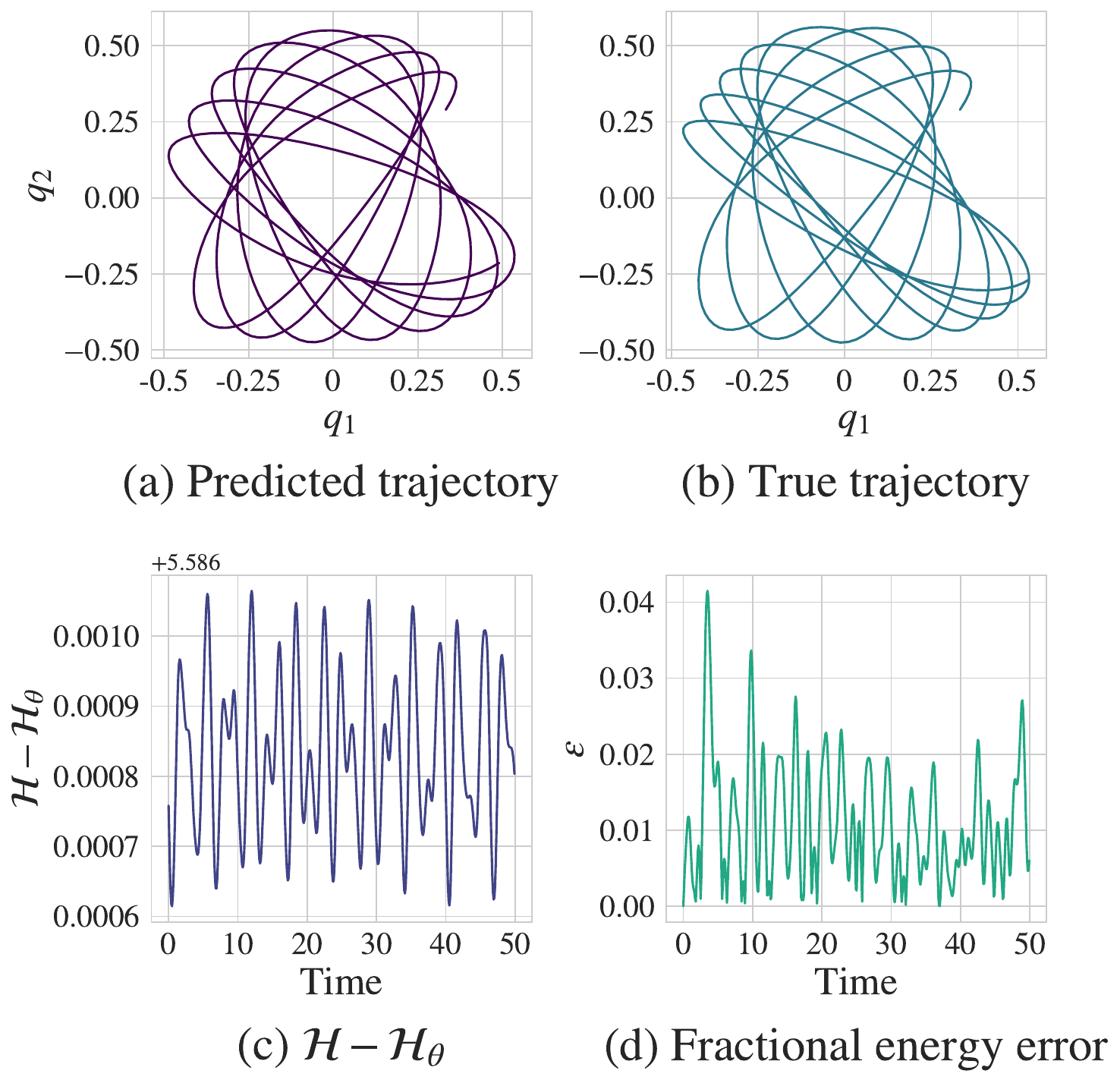}
    \caption{ASRNN predictions and error for a pair of unseen parameters $(0.5, 0.7)$ trained with NSR 10\% and $\tau=25\Delta t$, i.e. highly correlated noise. This is an example trajectory of $500$ time steps at energy $\approx 0.16$.}
    \label{fig:57_example_withnoise}
\end{figure}

\begin{figure*}
    \centering
    \subfloat[No noise]{\includegraphics[width=0.33\linewidth]{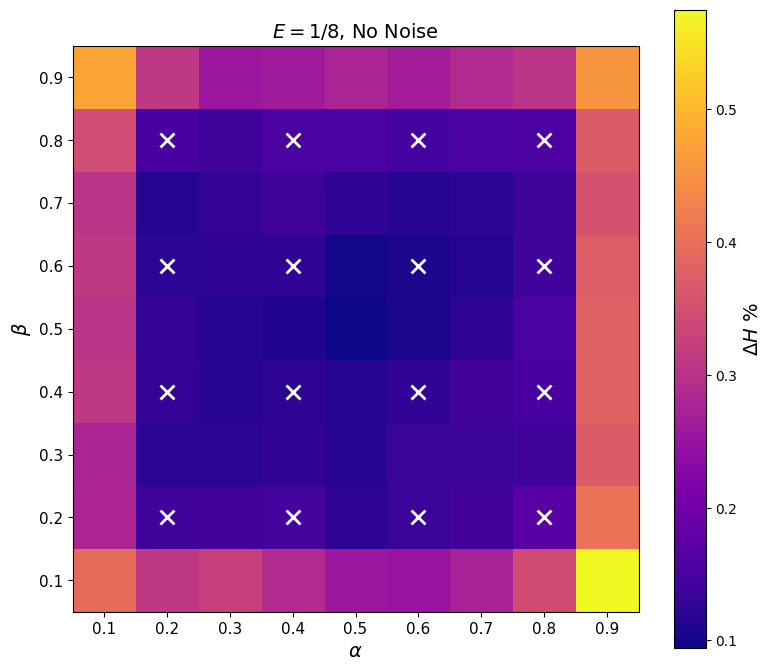}}
    \subfloat[NSR 10\%, $\tau=\Delta t/5$]{\includegraphics[width=0.33\linewidth]{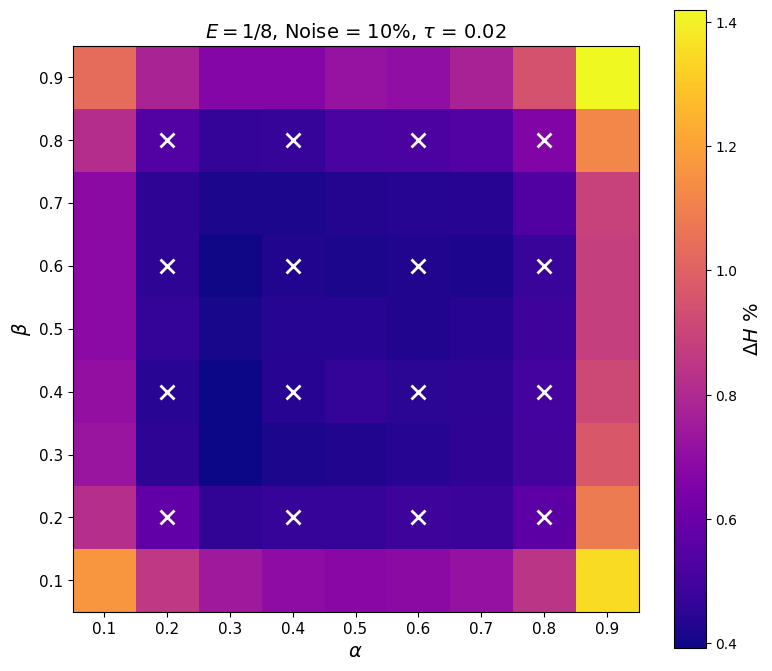}}
    \subfloat[NSR 10\%, $\tau=25 \Delta t$]{\includegraphics[width=0.33\linewidth]{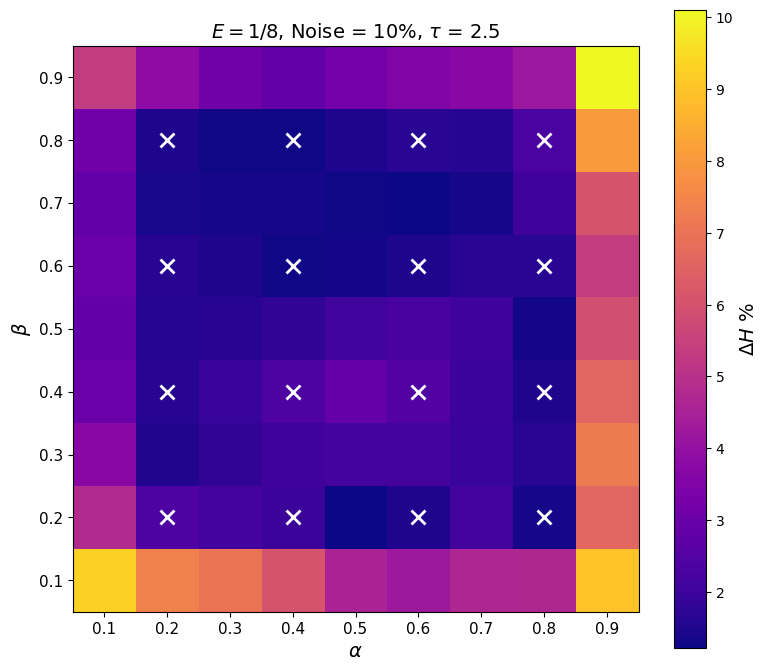}}
    \caption{Percentage fractional error (i.e. $\epsilon\times 100$) calculated by averaging over 40 trajectories of 500 steps at energy 1/8 for 81 parameter pairs in $[0.1, 0.9]^2$ (the white crosses indicate parameter points in the training set). This is calculated using trajectories generated by ASRNNs for three cases, one without noise and the others with significant noise and different extents of correlation, note the difference on the colorbar limits in the different cases.}
    \label{fig:energyerror}
\end{figure*}

\begin{figure} 
    \centering
    \includegraphics[width=\linewidth]{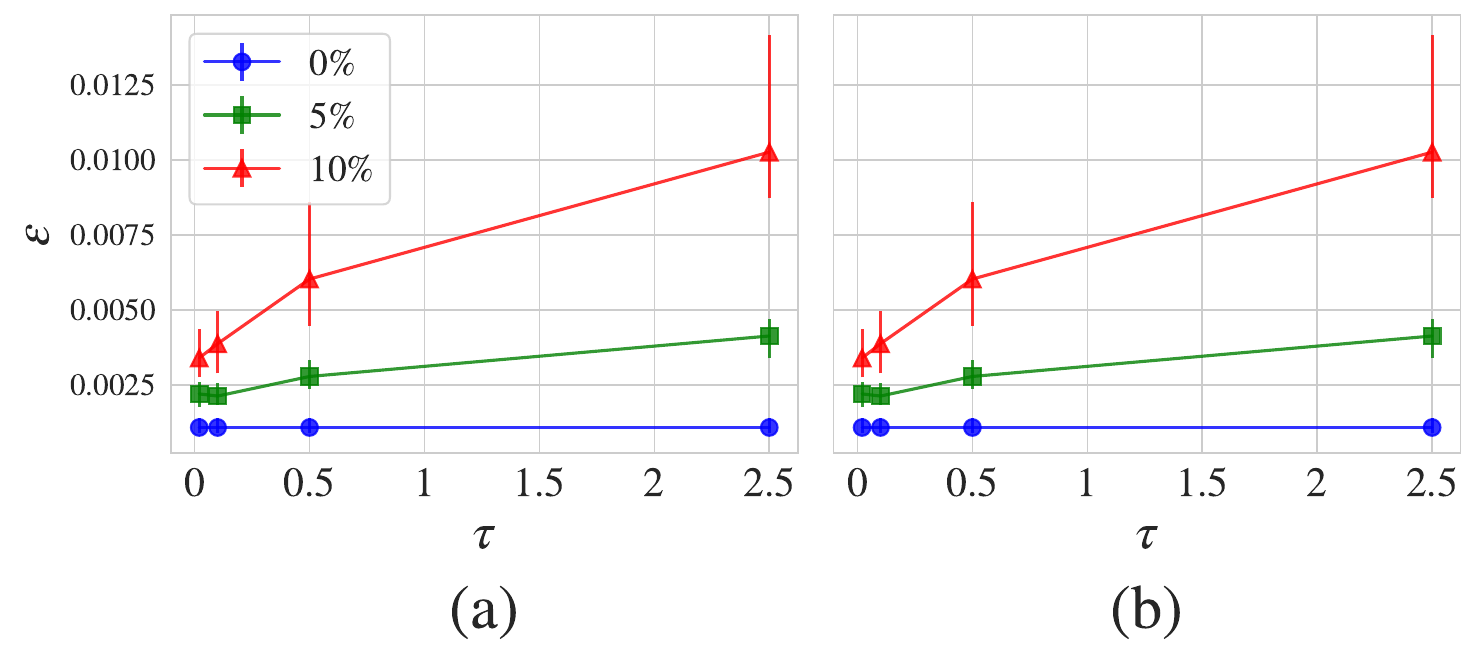}
    \caption{Median fractional energy error from ASRNN predictions computed over different extents of noise as a fixed parameter set for the Henon Heiles potential. Panel (a) indicates the results from a seen parameter set (0.4, 0.6) and panel (b) considers an unseen set (0.5, 0.7). The median is computed over 40 models each with 40 sample initial conditions and errorbars indicate the $25$th and $75$th percentiles.}
    \label{fig:energy_noise_HH}
\end{figure}

\subsubsection{Symbolic Regression}

\begin{figure} 
    \centering
    \includegraphics[width=\linewidth]{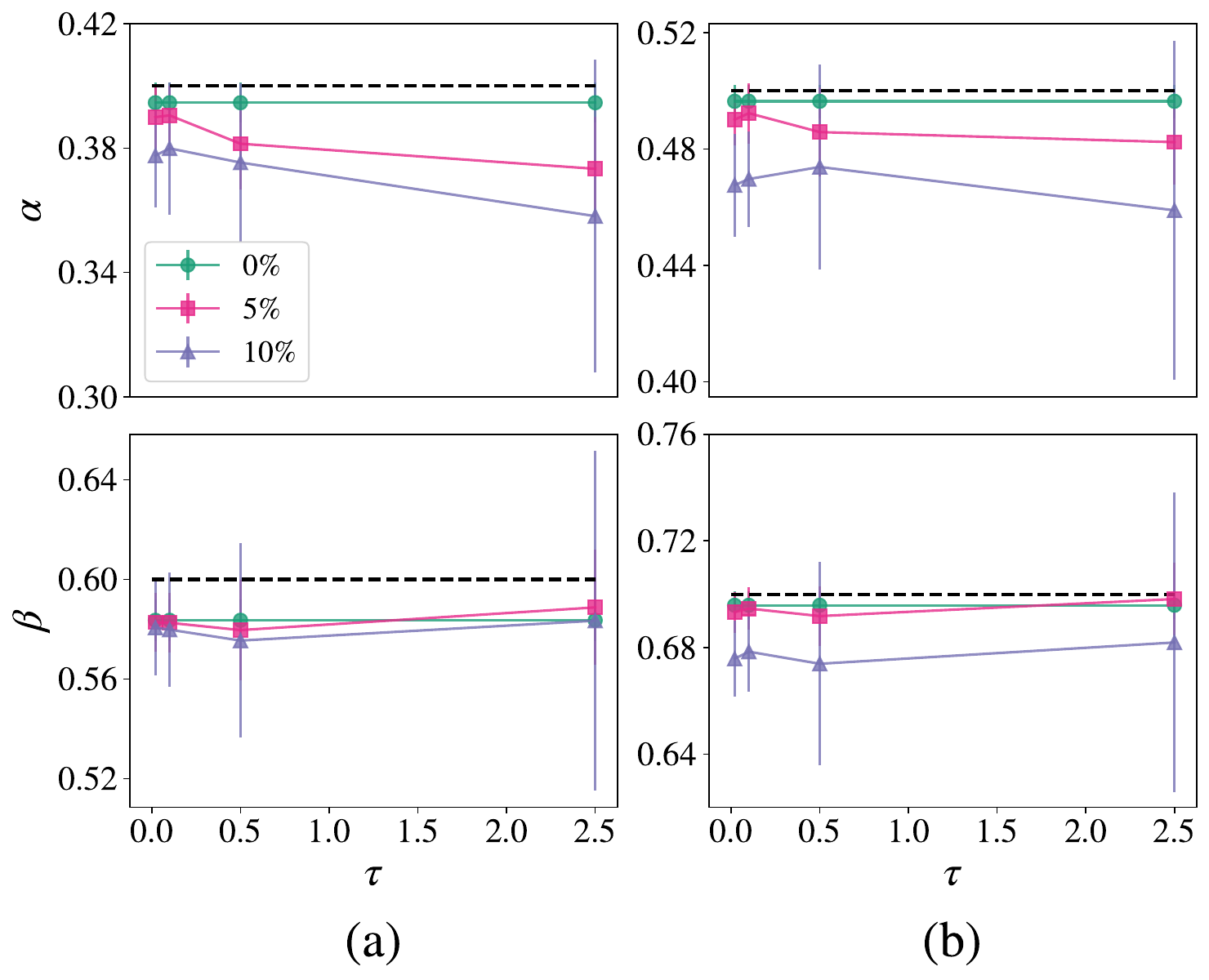}
    \caption{Estimated parameters obtained from symbolic regression with PySINDy for data generated by trained ASRNNs (with various extents of noise) with the Henon Heiles potential. Panel (a) indicates the results from a seen parameter set (0.4, 0.6) and panel (b) considers an unseen set (0.5, 0.7). Data is generated by an ensemble of 40 models and regressed independently, the estimated parameters are then averaged over this ensemble, errorbars indicate $1\sigma$ deviations.}
    \label{fig:HH_SR}
\end{figure}

To enable interpretability, ASRNNs are used as data generators for symbolic regression. Long, regularly spaced trajectories are generated and passed to PySINDy \cite{brunton2016discovering} and PySR \cite{cranmer2023interpretable}. The former is specifically designed to identify nonlinear dynamical systems of the form $\mathbf{\dot{x}} = f(\mathbf{x})$ while PySR is more flexible and can learn symbolic forms of general functions $f(\mathbf{x})$. PySINDy takes as input trajectories of $(\mathbf{q}, \mathbf{p})$ along with $\left(\frac{\partial \mathcal{K}_{\theta}}{\partial \mathbf{p}}, -\frac{\partial\mathcal{V}_{\theta}}{\partial \mathbf{q}}\right)$ and reliably recovers the exact equations of motion (Eqs.~\ref{eom1}–\ref{eom4}), including accurate estimates of $\alpha, \beta$ even when trained on data generated by noisy ASRNNs---see Fig.~\ref{fig:HH_SR}. PySR (defined with polynomial functions only), applied independently to kinetic and potential energies, recovers the correct functional forms up to constant offsets, yielding the full Hamiltonian (Eq.~\ref{Hamiltonian}). As noise magnitude and correlation increase, parameter variance grows, and due to the stochastic nature of PySR, small spurious terms occasionally appear on multiple independent runs. However, their coefficients are typically orders of magnitude smaller and do not affect the recovered dynamics. Full symbolic expressions are reported in the Supplementary Material.

\subsection{Non-polynomial systems - Morse potential} \label{Morse}

For the Morse potential, we trained ASRNNs with kinetic and potential energy networks with two hidden layers of 50 neurons each. Training followed the same sparse sampling protocol as before---for each parameter value $\alpha \in [0.5, 1.0, 2.0, 4.0]$, $800$ trajectory segments were constructed, each containing two observations separated by at most 14 time steps. \\

In the absence of noise, ASRNNs accurately reproduce the system dynamics and generalise well to unseen parameter values. Fig.~\ref{fig:morse_potential} shows the learned potential $\mathcal{V}_{\theta_2}(q)$ for various $\alpha$ under different extents of noise. Under noisy training, performance decays more strongly than in the Henon–Heiles case (Fig.~\ref{fig:Morse_energy_error}), especially for unseen parameters. This is primarily due to the struggle in capturing the sharp rebound behaviour in the presence of strongly correlated noise. As illustrated in Fig.~\ref{fig:morse_example2}, predicted and true trajectories under noise remain close at short times but diverge over longer horizons.

\begin{figure*}
    \centering
    \subfloat[No noise]{\includegraphics[width=0.25\linewidth]{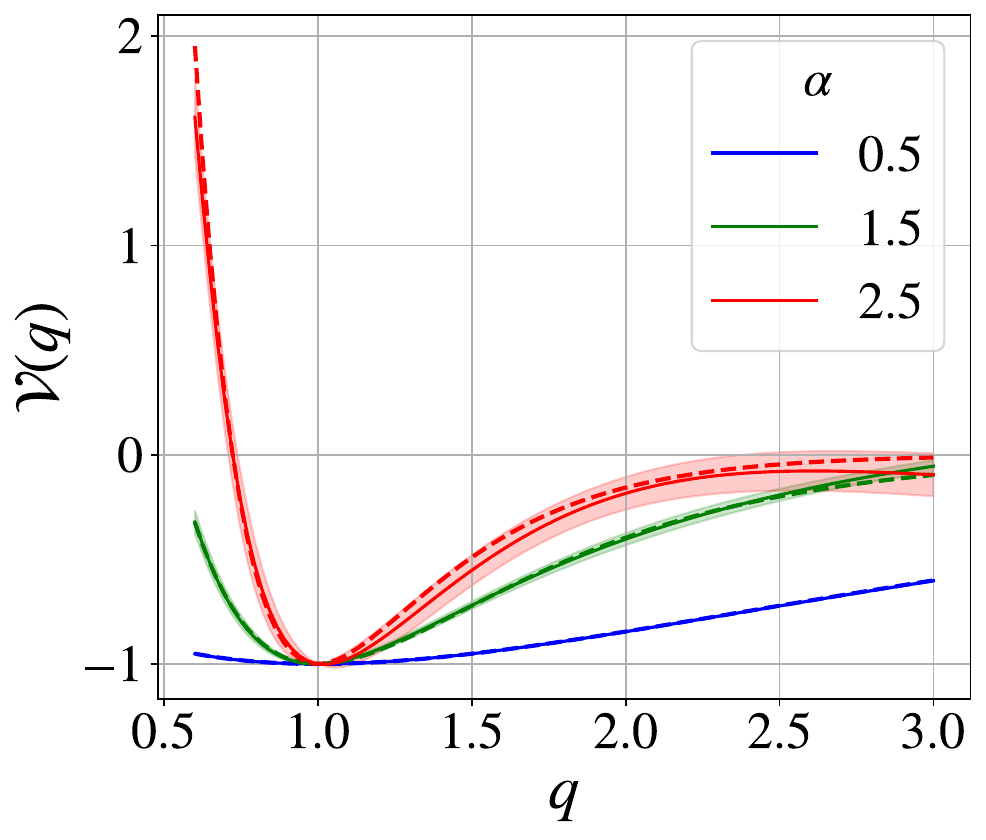}}
    \subfloat[NSR 5\%, $\tau=\Delta t/5$]{\includegraphics[width=0.25\linewidth]{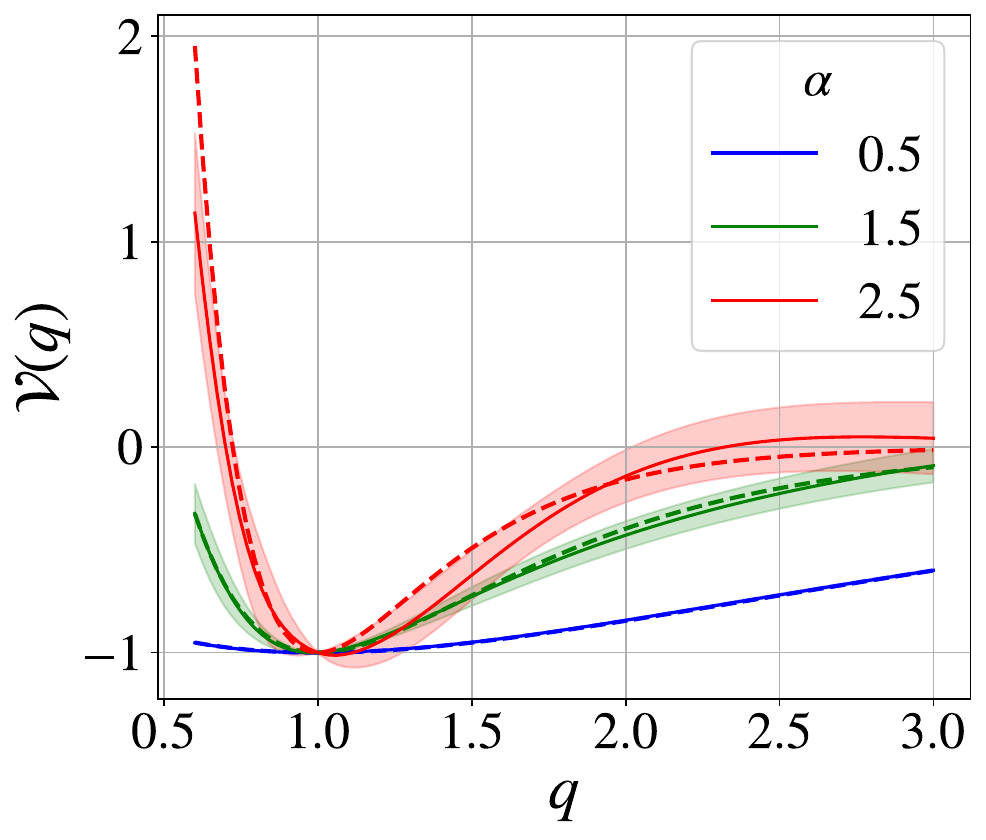}}
    \subfloat[NSR 5\%, $\tau=25\Delta t$]{\includegraphics[width=0.25\linewidth]{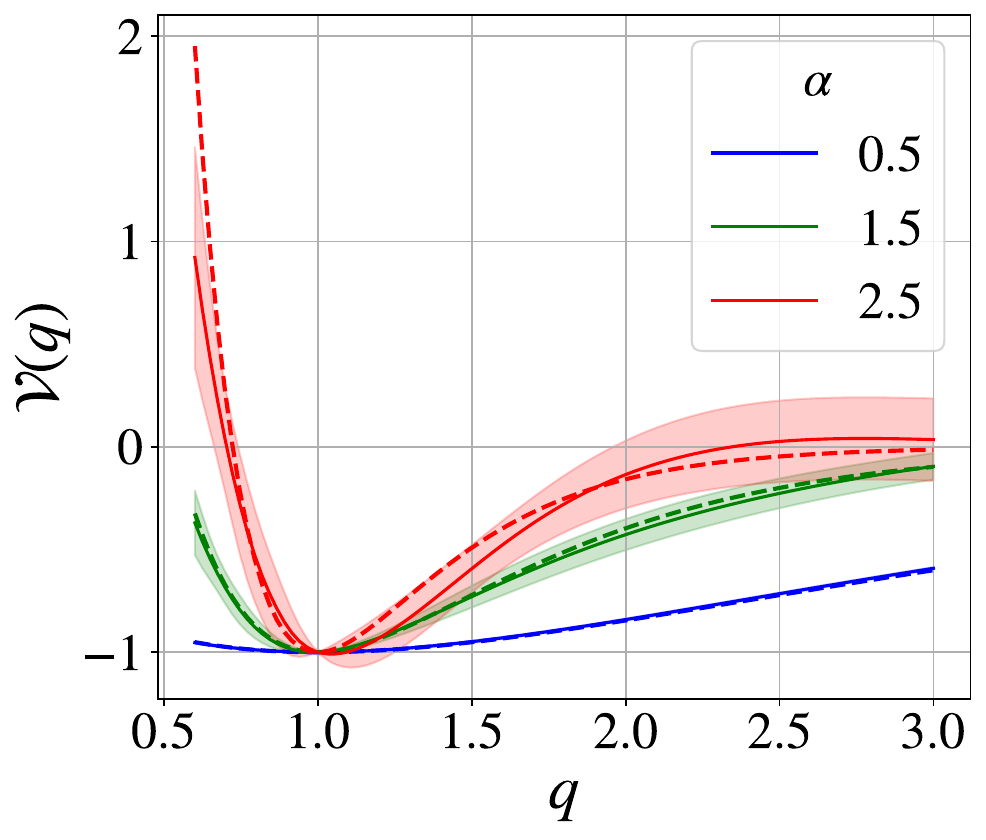}}
    \subfloat[NSR 10\%, $\tau=25\Delta t$]{\includegraphics[width=0.25\linewidth]{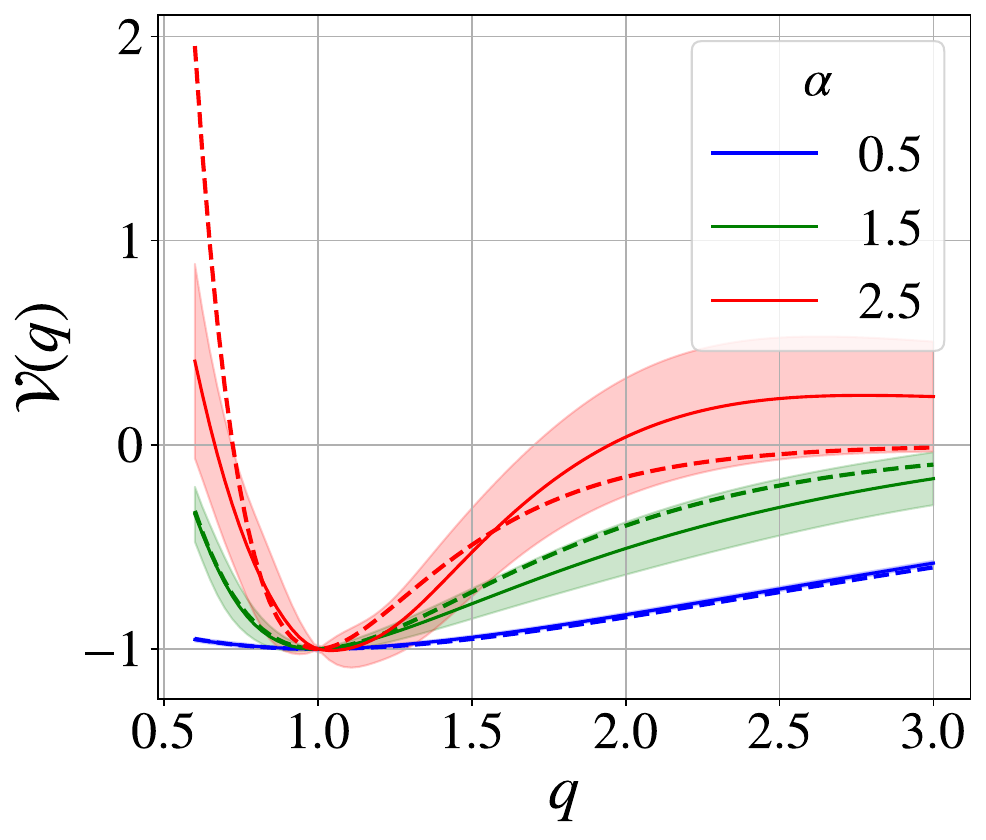}}
    \caption{Learned potential energy functions for the Morse system under different extents of noise averaged over an ensemble of 40 models, the regions indicate $1\sigma$ deviation and dashed line indicates the true potential curve. Here $\alpha=0.5$ is a seen parameter while the others are unseen.}
    \label{fig:morse_potential}
\end{figure*}

\begin{figure} 
    \centering
    \includegraphics[width=\linewidth]{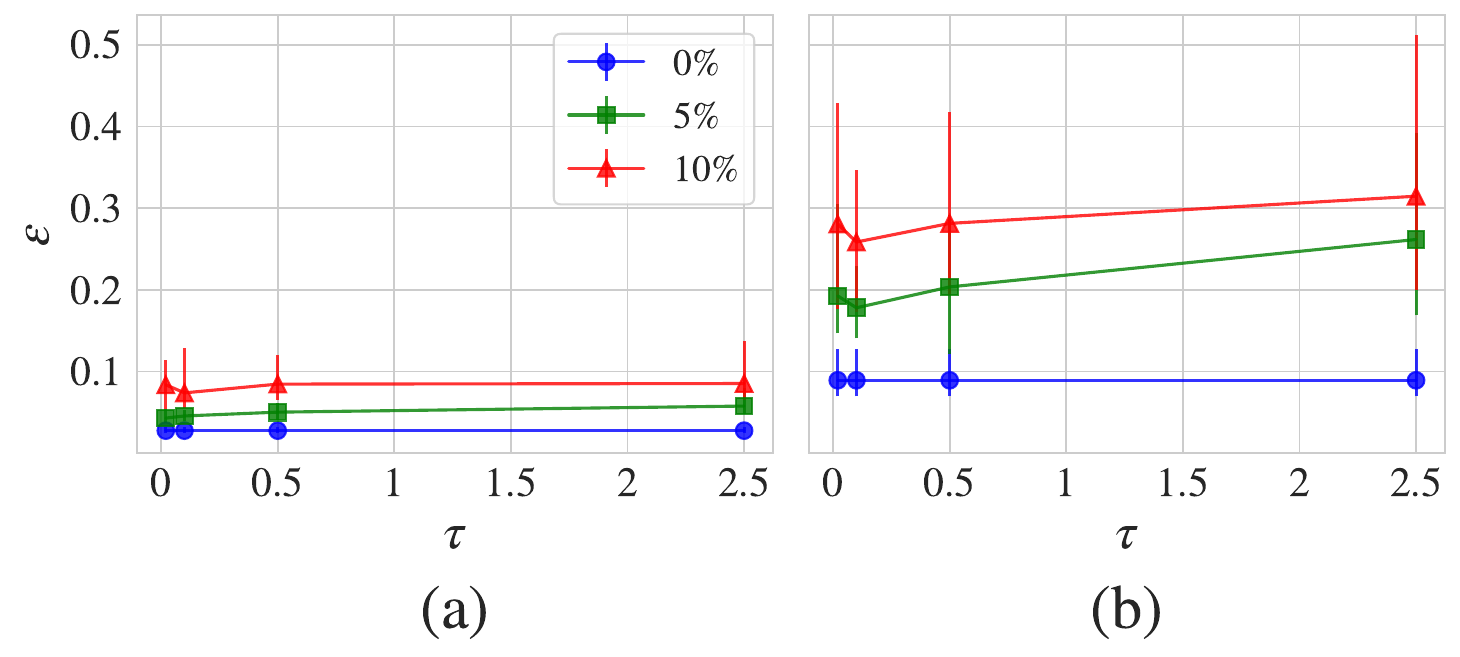}
    \caption{Median fractional energy error from ASRNN predictions computed over different extents of noise as a fixed parameter set for the Morse potential. Panel (a) indicates the results from a seen parameter $\alpha=2$ and panel (b) considers an unseen $\alpha=1.5$. The median is computed over 40 models each with 40 sample initial conditions and errorbars indicate the $25$th and $75$th percentiles.}
    \label{fig:Morse_energy_error}
\end{figure}

\begin{figure} 
    \centering
    \includegraphics[width=\linewidth]{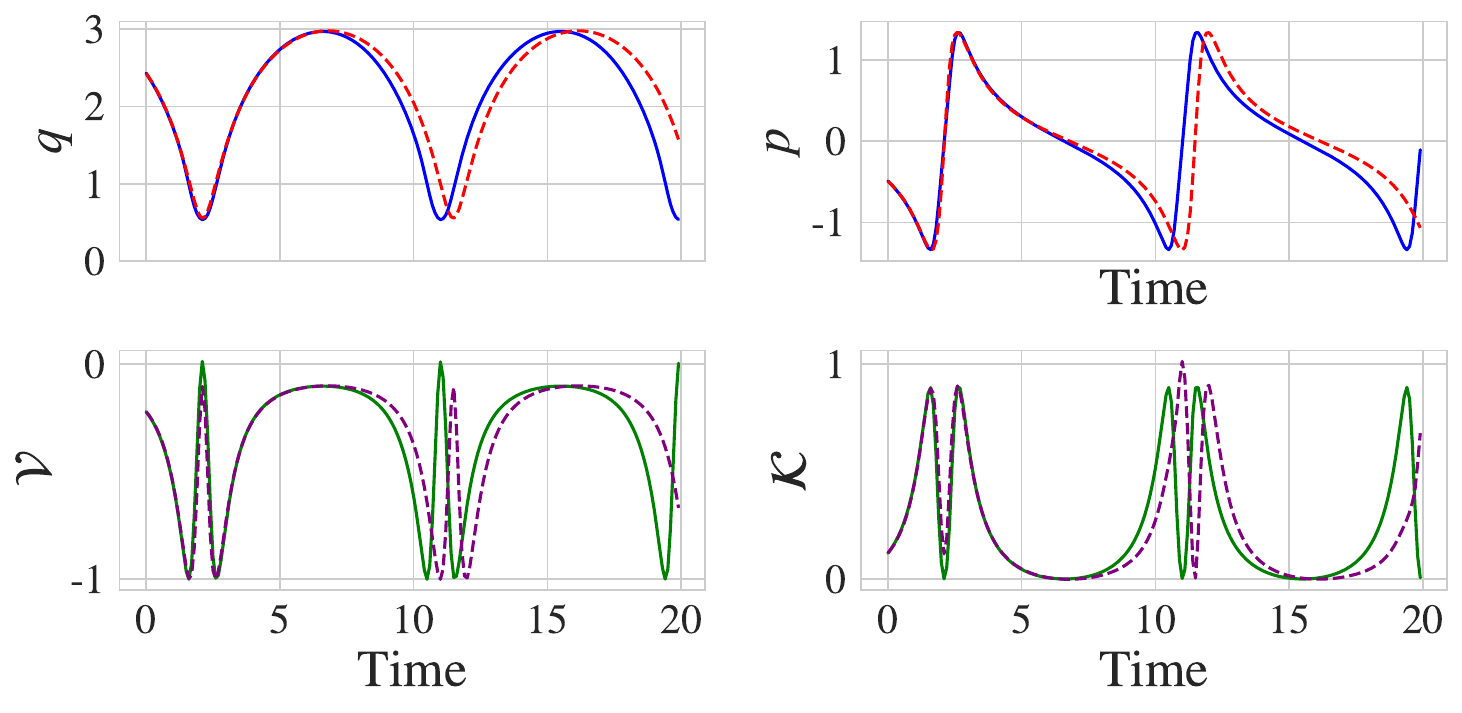}
    \caption{ASRNN predictions (trained with NSR 10\%, $\tau=25\Delta t$) compared to ground truth for the Morse potential with $\alpha=1.5$ (unseen). The solid lines represent the ASRNN predicted trajectory while the dashed lines are the ground truth---note the almost discontinuous change in momentum due to the `hard wall' type potential.}
    \label{fig:morse_example2}
\end{figure}

\subsubsection{Symbolic Regression}

As before, trained ASRNNs are used to generate regular trajectories for symbolic regression. Because PySINDy does not support exponential basis functions without prior specification, we apply PySR directly to the learned kinetic and potential energies. The regressor is trained on 2000 time-shuffled samples drawn from trajectories of length 500, generated from 40 initial conditions at fixed $\alpha$ but varying energies. While the kinetic energy function is learnt exactly, PySR cannot recover the exact exponential form of the Morse potential \footnote{This is not due to weaker predictions from the ASRNN but rather due to performance of the symbolic regression algorithms themselves, to confirm this we also attempted the symbolic regression with `ground truth' data and found the same results.}. However, when restricted to polynomial functions, it successfully learns an approximation corresponding to a truncation of the expansion \footnote{See Supplementary material for exact expressions},
\begin{equation}
    \mathcal{V}(\tilde{q}) = \sum_{n=0}^{\infty} c_n \tilde{q}^n =  2\sum_{n=0}^{\infty} (-1)^n (2^{n-1}-1) \frac{\alpha^n \tilde{q}^n}{n!}, \ \tilde{q} := q-1.
    \label{expanded_form}
\end{equation}
In particular, the regressor correctly identifies  $c_1 = 0$ in Eq.~\ref{expanded_form}
and we thus use the form $c_2 = \alpha^2$ to estimate the predicted value of $\alpha$ from the symbolic regression---see Table~\ref{tab:morse_SR} for the obtained parameters under various extents of noise.

\begin{table}[]
\centering
\begin{tabular}{|c|c|cccc|}
\hline
\multirow{2}{*}{\textbf{True}} &
  \multirow{2}{*}{\textbf{NSR}} &
  \multicolumn{4}{c|}{\textbf{$\tau$}} \\ \cline{3-6} 
 &  & \multicolumn{1}{c|}{\textbf{0.02}} & \multicolumn{1}{c|}{\textbf{0.1}} & \multicolumn{1}{c|}{\textbf{0.5}} & \textbf{2.5} \\ \hline
\multirow{3}{*}{\begin{tabular}[c]{@{}c@{}}2\\ (seen)\end{tabular}} &
  0\% &
  \multicolumn{4}{c|}{1.9995} \\ \cline{2-6} 
 &
  5\% &
  \multicolumn{1}{c|}{1.9927} &
  \multicolumn{1}{c|}{1.9905} &
  \multicolumn{1}{c|}{2.0437} &
  2.0816 \\ \cline{2-6} 
 &
  10\% &
  \multicolumn{1}{c|}{2.014} &
  \multicolumn{1}{c|}{2.0310} &
  \multicolumn{1}{c|}{2.0792} &
  2.1005 \\ \hline
\multirow{3}{*}{\begin{tabular}[c]{@{}c@{}}1.5\\ (unseen)\end{tabular}} &
  0\% &
  \multicolumn{4}{c|}{1.5263} \\ \cline{2-6} 
 &
  5\% &
  \multicolumn{1}{c|}{1.5234} &
  \multicolumn{1}{c|}{1.5390} &
  \multicolumn{1}{c|}{1.5467} &
  1.5754 \\ \cline{2-6} 
 &
  10\% &
  \multicolumn{1}{c|}{1.424} &
  \multicolumn{1}{c|}{1.3950} &
  \multicolumn{1}{c|}{1.5748} &
  1.5969 \\ \hline
\end{tabular}
\caption{Estimated parameter $\alpha$ obtained from symbolic regression with PySR for data generated by trained ASRNNs with various extents of noise for the Morse potential.}
\label{tab:morse_SR}
\end{table}

\section{Theoretical analysis} \label{theory}

The improved performance of ASRNNs in contrast to previous parameter-cognizant architectures and their behaviour under noise can be formalised theoretically. In practice, AHNNs require estimates of time derivatives during training (Eq.~\ref{HNNloss}), which are typically obtained via finite differences. In the presence of noise, such estimates suffer from a severe bias–variance trade-off: for example, the forward finite-difference estimator under OU correlated noise satisfies
\begin{equation}
    \mathrm{Var}\!\left(\frac{y_{t+\Delta s,i}-y_{t,i}}{\Delta s}\right)
= \frac{2\sigma_{\infty}^2}{\Delta s^2} (1 - e^{-\Delta s/\tau}),
\label{finite difference estimator}
    \end{equation}
where $\Delta s$ is the sampling interval of the data. Thus while small $\Delta s$ improves the point accuracy of the finite difference estimator, it results in diverging variance due to noise. \\

In contrast, learning under symplectic recurrent architectures is not strongly affected by the sampling interval and is purely determined by properties of the noise and dynamics, i.e.
\begin{theorem}
\label{theorem1}
    Let $\hat{z}_N(\theta;z_0 + \eta_0) = \Phi_{\theta, \Delta t}^{N} (z_0 + \eta_0) \in \mathbb{R}^d$ be an N-step ASRNN map with integrator time step $\Delta t$, model parameters $\theta$, initial (noiseless) observation $z_0$, and correlated, noisy observations $z_N+\eta_N$ where $\eta_N | \eta_0 \sim \mathcal{N}(a^N \eta_0, \sigma_{\infty}^2 (1-a^{2N}) I_d)$, the single sample loss gradient $\nabla_{\theta} l = \nabla_{\theta} (||\hat{z}_N(\theta;z_0 + \eta_0) - z_N - \eta_N||^2) $ satisfies
    \begin{enumerate}
        \item 
        \begin{multline}
            \mathbb{E}_{\eta_0, \eta_N} [\nabla_{\theta} l] = \nabla_{\theta}l_0 +\\ \sigma_{\infty}^2 \left [ \frac{1}{2} \nabla_{\theta} (\nabla_z^2 f_{\theta})|_{z_0} - \left(\nabla_z^2 \frac{\partial \hat{z}_N^T}{\partial\theta}\Bigg\vert_{z_0} \right) z_N \right]\\ - 2a^N \sigma_{\infty}^2\nabla_{\theta}(\nabla_z \cdot \hat{z}_N) |_{z_0} + \mathcal{O}(\sigma_{\infty}^3)
        \end{multline}
    \end{enumerate}
    where $l_0 (\theta ; z_0, z_N) = ||\hat{z}_N (\theta; z_0) - z_N||^2$ is the noise-free contribution to the loss and $f(\theta, z) = ||\hat{z}_N(\theta ; z)||^2$.
\end{theorem}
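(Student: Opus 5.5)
Expand the single-sample loss around the noise-free initial condition and take expectations term by term. Write $\hat z_N(\theta;z_0+\eta_0)$ and use the smoothness of the Verlet map $\Phi^N_{\theta,\Delta t}$: it is a composition of smooth maps built from $\tanh$ MLPs, so it is $C^\infty$ in $(\theta,z)$. Then expand
\[
l=\|\hat z_N(\theta;z_0+\eta_0)\|^2-2\,(z_N+\eta_N)^T\hat z_N(\theta;z_0+\eta_0)+\|z_N+\eta_N\|^2 .
\]
The last term does not depend on $\theta$, so it drops out of $\nabla_\theta l$. The gradient therefore splits into two pieces: $\nabla_\theta f(\theta,z_0+\eta_0)$ with $f=\|\hat z_N\|^2$, and the cross term $-2\nabla_\theta[(z_N+\eta_N)^T\hat z_N(\theta;z_0+\eta_0)]$.

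First handle $\eta_N$ by conditioning on $\eta_0$. The cross term is linear in $\eta_N$, so the tower property lets me replace $\eta_N$ by $\mathbb E[\eta_N\mid\eta_0]=a^N\eta_0$. The conditional covariance $\sigma_\infty^2(1-a^{2N})I_d$ never enters the mean of the gradient. After this step only an expectation over $\eta_0\sim\mathcal N(0,\sigma_\infty^2 I_d)$ remains, taken from the stationary OU law.

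Next, Taylor-expand each piece in $\eta_0$ to second order. Odd moments vanish and $\mathbb E[\eta_0\eta_0^T]=\sigma_\infty^2 I$, so three contributions appear.
\begin{itemize}
\item From $f$: $\mathbb E\,\nabla_\theta f(z_0+\eta_0)=\nabla_\theta f(z_0)+\tfrac{\sigma_\infty^2}{2}\nabla_\theta\nabla_z^2 f|_{z_0}$, using that $\nabla_\theta$ commutes with the Laplacian.
\item From $-2z_N^T\hat z_N$: $-2\nabla_\theta(z_N^T\hat z_N)|_{z_0}-\sigma_\infty^2\,\nabla_z^2\!\left(\partial\hat z_N^T/\partial\theta\right)\big|_{z_0}z_N$, with the Laplacian acting componentwise on the Jacobian.
\item From $-2a^N\eta_0^T\hat z_N(z_0+\eta_0)$: the zeroth-order term has zero mean. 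The first-order term gives $\mathbb E[\eta_0^T J\eta_0]=\sigma_\infty^2\operatorname{tr}J=\sigma_\infty^2\,\nabla_z\!\cdot\hat z_N$, where $J=\nabla_z\hat z_N|_{z_0}$. This produces $-2a^N\sigma_\infty^2\nabla_\theta(\nabla_z\cdot\hat z_N)|_{z_0}$; the next order is cubic in $\eta_0$ and has zero mean.
\end{itemize}
Recombining $\nabla_\theta f(z_0)-2\nabla_\theta(z_N^T\hat z_N)|_{z_0}$ gives $\nabla_\theta l_0$. This yields exactly the stated expression.

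The main obstacle is controlling the remainder so that it is genuinely $\mathcal O(\sigma_\infty^3)$ in expectation, uniformly at fixed $\theta$. Odd Gaussian moments vanish, so the true remainder is in fact $\mathcal O(\sigma_\infty^4)$, but I still need integrable bounds on the higher $z$-derivatives of $\hat z_N$ and $\nabla_\theta\hat z_N$ over all of $\mathbb R^d$.

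My first attempt would use the integral form of the Taylor remainder. I would bound third derivatives by a polynomial in $|\eta_0|$, using that $\tanh$ has bounded derivatives of all orders and that $\Phi^N$ is an $N$-fold composition with affine growth. Gaussian moments of all orders are finite, which closes the bound. If global bounds turn out to be awkward, I would fall back on a local statement: a compact neighbourhood of $z_0$ together with Gaussian tail estimates.

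Justifying the interchange of $\nabla_\theta$ and $\mathbb E$ follows from the same domination argument.
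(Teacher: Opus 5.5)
Your proposal is correct and follows essentially the same route as the paper: a second-order Taylor expansion in $\eta_0$, the Gaussian moment identities $\mathbb{E}[\eta_{0i}]=0$ and $\mathbb{E}[\eta_{0i}\eta_{0j}]=\sigma_\infty^2\delta_{ij}$, and the tower property with $\mathbb{E}[\eta_N\mid\eta_0]=a^N\eta_0$. The differences are organizational plus one addition. You split $l$ into $f-2(z_N+\eta_N)^T\hat z_N$ plus a $\theta$-independent term before expanding, so the $\tfrac12\nabla_\theta\nabla_z^2 f$ term appears directly, whereas the paper expands the product $2M_\theta^T[\hat z_N-z_N-\eta_N]$ factor by factor and leaves the recombination implicit; and you add a remainder-control and derivative--expectation interchange argument, which the paper does not supply.
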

\begin{proof}
    See Appendix~\ref{appendix proof}
\end{proof}

Thus the expected per sample gradient is determined by the pure (no-noise) loss, $\mathcal{O}(\sigma_{\infty}^2)$ terms involving the Laplacians of the learned map $\hat{z}_N(\theta;z_0)$, and a term proportional to $e^{-N\Delta t/\tau}$. Note that this term is the only point where the sampling interval $N\Delta t$ is involved, thus if $N\Delta t >> \tau \Rightarrow e^{-N\Delta t/\tau} \to 0$ and the effect of time correlations vanishes, whereas in the regime of large $\tau$ the effect of correlations offsets the expected loss gradient by a term proportional to trace of the Jacobian of the learned transformation from $z_0 \to z_N$, i.e. $(\nabla_z \cdot \hat{z}_N) |_{z_0}$. This explains the model behaviour on increasing $\tau$ as observed in Section \ref{results}. \\

Note that for AHNN type architectures the expected loss gradient under OU noise additionally depends inversely on the sampling interval (see Appendix~\ref{AHNN theorem} for proof) echoing the behaviour of the finite difference estimator, Eq. \ref{finite difference estimator}.

\section{Conclusion} \label{conclusion}

In this work we introduced Adaptable Symplectic Recurrent Neural Networks (ASRNNs), a parameter-cognizant architecture for learning Hamiltonian dynamics by combining adaptable Hamiltonian neural networks with symplectic recurrent integration. This work serves as an empirical study demonstrating the ability of such architectures to learn complex Hamiltonians under the regime of extremely sparse, irregularly spaced and noisy data. We support this claim with theoretical arguments to explain why symplectic models are more robust to training under noise in comparison to previous parameter-cognizant models requiring time derivatives for training. The analysis further explains the empirical behaviour of ASRNNs under various extents of correlated noise. \\

Remarkably, ASRNNs achieve accurate long-term predictions even when each training trajectory consists of only two time points, potentially irregularly spaced and corrupted by correlated noise. Leveraging this capability, we used ASRNNs as structure-preserving data generators to enable symbolic regression with PySINDy and PySR. This pipeline successfully recovered the exact equations of motion for the Henon–Heiles system and accurate polynomial approximations to the Morse Hamiltonian, despite the severe sparsity of the original data. The data regime explored here is sparse along two `axes': the number of available observations per parameter setting and the number of samples within each observation window. That ASRNNs perform well under such constraints, using relatively small networks, highlights their robustness and suitability for realistic scientific settings where dense, noise-free measurements are unavailable. As the amount of data is increased along any of the two axes, we expect a much more robust performance. However, the aim of this study is to illustrate the performance of a small model, trained on very sparse data, under the ASRNN inductive bias.

These results suggest that ASRNNs provide a practical route to interpretable discovery of Hamiltonian dynamics from limited experimental or observational data. Future work includes studying higher-dimensional systems, learning dynamics in noncanonical coordinates (similar to \cite{choudhary2021forecasting}), and incorporating initial-state optimisation to further mitigate the impact of noise in observed initial conditions.

\section*{Software and Data}

Code to reproduce the results in this paper as well as produce additional results can be found at \href{https://github.com/fibrebundle/ASRNN_Sparse_Data.git}{https://github.com/fibrebundle/ASRNN\_Sparse\_Data.git}. Full symbolic regression results can be found in the Supplementary Material.

\begin{acknowledgments}
VT acknowledges the Rhodes Trust and Mathematical Institute, University of Oxford for funding.
\end{acknowledgments}

\appendix
%\begin{comment}
    
\section{Extrapolation beyond the training region} \label{appendix1}

The primary focus of our empirical study has been on parameter adaptability within the limits of the training region, for example $[0.2, 0.8]^2$ in the case of the Henon-Heiles potential. However, it is equally interesting to study the performance of ASRNNs when tested beyond this region, i.e. ability to `extrapolate' to qualitatively different dynamical regimes. To illustrate this, we trained an ASRNN to learn the dynamics of the double well (DW) potential given as
\begin{equation}
    \mathcal{V}(q) = \frac{\alpha}{2} q^2 + \frac{1}{4} q^4.
\end{equation}
Here $\alpha$ acts as a bifurcation parameter: for $\alpha>0$ the potential has a single minimum while with $\alpha<0$ there are two degenerate minima. Thus the DW system displays  symmetry breaking with qualitatively different dynamics on either side of $\alpha=0$ making it a simple but non-trivial model to study model extrapolation. \\

We trained models with the same architecture as that for Section~\ref{Morse} with sparse data for only positive parameter values, these being $\alpha \in \{0.1, 0.3, 0.5, 0.7, 0.9\}$. Data was generated as described in Section~\ref{TD} After training, we tested what the potential energy network $\mathcal{V}_{\theta}$ had learned about negative parameter values, having been trained only on positive values. While the ASRNNs performed excellently within the training region $[0.1, 0.9]$ even in the presence of noise, the behaviour for $\alpha \leq 0$ was sensitive to both noise and weight initialisation. Models trained on noise-free data were able to predict (at least qualitatively) symmetry breaking for negative parameter values---see Fig.~\ref{fig:DW_results}. In contrast, performance for models trained under noise degraded sharply outside the training region and they struggled to capture this behaviour. This in itself is not surprising as it is well known that MLPs struggle to extrapolate \cite{xu2021how}. However, it is notable that under noise-free conditions, ASRNNs appear to capture the essential transition at $\alpha=0$ and predict well for small negative $\alpha$.\\

\begin{figure*}
    \centering
    \includegraphics[width=\linewidth]{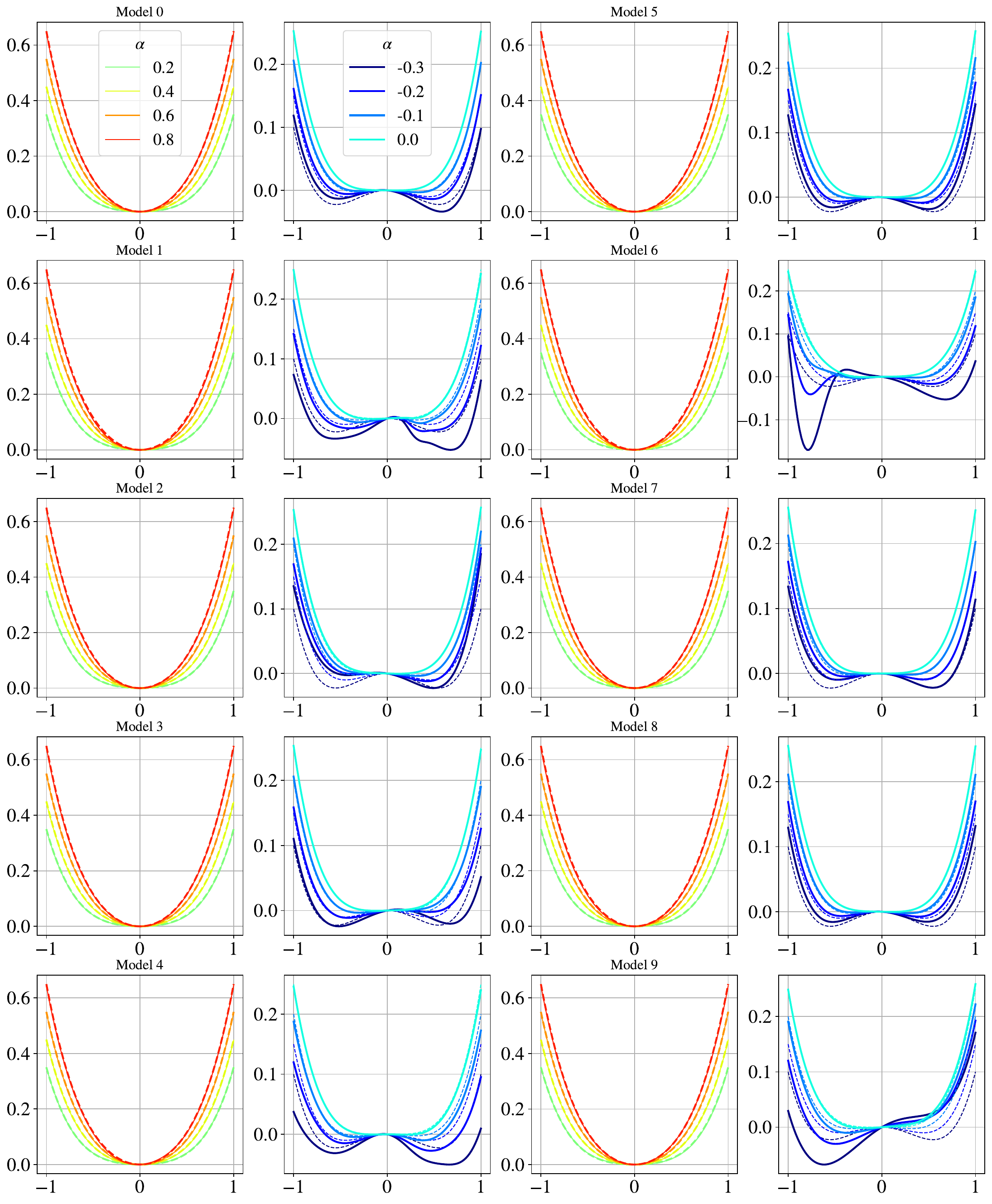}
    \caption{Learned (solid line) and true (dashed line) potential profiles for the double well potential for a sample of 10 models. For each model the left panel considers unseen values of $\alpha$ in the training region (single minimum), right panel displays the bifurcation point $\alpha=0$ and other $\alpha<0$ (two minima).}
    \label{fig:DW_results}
\end{figure*}

As observed from Fig.~\ref{fig:DW_results}, independent models trained on the same data but differing in the random weight initialization (and with very similar training and validation loss profiles), made significantly different predictions in the bifurcation regime. Thus as a direction of future research, it would be interesting to explore how such `equivalent' models (in the sense of identical training behaviour) differ internally, and since all one has access to is the loss profile, how should practitioners judge whether a model has captured the underlying physics of the system.

\section{Proof of Theorem \ref{theorem1}} \label{appendix proof}

\begin{proof}
    Let $\hat{z}_N(\theta;z_0 + \eta_0) = \Phi_{\theta, \Delta t}^{N} (z_0 + \eta_0) \in \mathbb{R}^d$ be an N-step ASRNN map with model parameters $\theta$, integrator time step $\Delta t$, initial (noiseless) observation $z_0=(q_0, p_0)$, and for correlated, noisy observations $z_N+\eta_N$ where $\eta_N | \eta_0 \sim \mathcal{N}(a^N \eta_0, \sigma_{\infty}^2 (1-a^{2N}) I_d)$. The per-sample loss is given by
    \begin{equation}
        l(\theta ; z_0, \eta_0, \eta_N) = ||\hat{z}_N (\theta;z_0 + \eta_0) - z_N - \eta_N||^2,
    \end{equation}
    and the gradient may be written as
    \begin{equation}
        \nabla_{\theta} l = 2 \frac{\partial \hat{z}_N^T}{\partial \theta} (\theta ; z_0+\eta_0) \left[\hat{z}_N (\theta;z_0 + \eta_0) - z_N - \eta_N \right]
    \end{equation}
    Defining $M_\theta(z) = \nabla_{\theta}\hat{z}_N|_z$ and expanding to second order in $\eta_0$ yields
    \begin{multline}
        \nabla_{\theta} l  
            =  2 \Biggr[M_\theta(z_0) + \sum_i \eta_{0i}\frac{\partial M_\theta}{\partial z_i}  (z_0)\\
            +\frac{1}{2} \sum_{i,j} \eta_{0i} \eta_{0j} \frac{\partial M_\theta}{\partial z_i \partial z_j} (z_0)\Biggr]^T \Biggr[  \hat{z}_N (\theta;z_0) +  \sum_i \eta_{0i}\frac{\partial \hat{z}_N}{\partial z_i}  (z_0)\\
            +\frac{1}{2} \sum_{i,j} \eta_{0i} \eta_{0j} \frac{\partial \hat{z}_N}{\partial z_i \partial z_j} (z_0)   - z_N - \eta_N\Biggr] + \mathcal{O}(||\eta_0||^3)
            \label{B3}
    \end{multline}
    With $\mathbb{E}[\eta_{0i}] = 0$, $\mathbb{E}[\eta_{0i} \eta_{0j}] = \sigma_{\infty}^2 \delta_{ij}$ and $\mathbb{E}[\eta_N|\eta_0] = a^N\eta_0$, the expected loss gradient may be written as
    \begin{multline}
        \mathbb{E}_{\eta_N,\eta_0} \left[\nabla_{\theta} l (\theta; z_0, \eta_0, \eta_N \right] = \mathbb{E}_{\eta_0}\left[\mathbb{E}_{\eta_N|\eta_0} \left[\nabla_{\theta} l \right]\right]= \nabla_{\theta}l_0 +\\ \sigma_{\infty}^2 \left [ \frac{1}{2} \nabla_{\theta} (\nabla_z^2 f_{\theta})|_{z_0} - \left(\nabla_z^2 M_{\theta}(z_0)\right) z_N \right]\\ - 2a^N \sigma_{\infty}^2\nabla_{\theta}(\nabla_z \cdot \hat{z}_N) |_{z_0} + \mathcal{O}(\sigma_{\infty}^3).
    \end{multline}
    Here we have identified $\nabla_{\theta} l_0 = 2M_\theta^T(z_0) (\hat{z}_N(\theta;z_0) -z_N)$ as the noise-free gradient.
\end{proof}

\subsection{Expected loss gradient for AHNNs} \label{AHNN theorem}

We can derive an analogue of Theorem~\ref{theorem1} for an adaptable AHNN to illustrate the bias-variance tradeoff of using finite difference estimators for training AHNNs in noisy conditions.

\begin{theorem}
\label{theorem2}
    Let $\hat{\dot{z}}(\theta;z) = \left(\partial_p \mathcal{H}_{\theta} (z;\lambda), -\partial_q \mathcal{H}_{\theta} (z;\lambda) \right)$ denote the AHNN vector field with model parameters $\theta$. Let the initial (noiseless) observation be $z_0$, and correlated, noisy observations $z_N+\eta_N$ where $\eta_N | \eta_0 \sim \mathcal{N}(a^N \eta_0, \sigma_{\infty}^2 (1-a^{2N}) I_d)$, the single sample loss gradient $\nabla_{\theta} l = \nabla_{\theta} (||\hat{\dot{z}}(\theta;z_0 + \eta_0) - \frac{(z_N + \eta_N) - (z_0 + \eta_0)}{\Delta s}||^2) $ satisfies
    \begin{enumerate}
        \item 
        \begin{multline}
            \mathbb{E}_{\eta_0, \eta_N} [\nabla_{\theta} l] = \nabla_{\theta}l_{0, \text{AHNN}} +\\ \sigma_{\infty}^2 \left [ \frac{1}{2} \nabla_{\theta} (\nabla_z^2 g_{\theta})|_{z_0} - \left(\nabla_z^2 \frac{\partial \hat{\dot{z}}^T}{\partial\theta}\Bigg\vert_{z_0} \right) \frac{z_N - z_0}{\Delta s} \right]\\ + 2\frac{(1-a^N) \sigma_{\infty}^2}{\Delta s}\nabla_{\theta}(\nabla_z \cdot \hat{\dot{z}}) |_{z_0} + \mathcal{O}(\sigma_{\infty}^3),
        \end{multline}
    \end{enumerate}
    where $\Delta s$ is the sampling interval, $l_{0,\text{AHNN}} (\theta ; z_0, z_N) = ||\hat{\dot{z}} (\theta; z_0) - \frac{z_N - z_0}{\Delta s}||^2$ is the noise-free contribution to the loss and $g(\theta, z) = ||\hat{\dot{z}}(\theta ; z)||^2$.
\end{theorem}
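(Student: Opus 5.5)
We follow the same route as the proof of Theorem~\ref{theorem1}: write the gradient explicitly, Taylor-expand in the initial noise $\eta_0$ to second order, then take the conditional expectation over $\eta_N$ given $\eta_0$, followed by the expectation over $\eta_0$. Set $r(\theta;\eta_0,\eta_N) := \hat{\dot z}(\theta;z_0+\eta_0) - \frac{(z_N+\eta_N)-(z_0+\eta_0)}{\Delta s}$ and $M_\theta(z) := \nabla_\theta \hat{\dot z}|_z$. Then
\begin{equation}
\nabla_\theta l = 2\, M_\theta^T(z_0+\eta_0)\, r(\theta;\eta_0,\eta_N).
\end{equation}
The one structural difference from the ASRNN case is that the residual now carries $\eta_0$ and $\eta_N$ only through the combination $(\eta_N-\eta_0)/\Delta s$. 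Moreover, $M_\theta$ is evaluated at the perturbed point $z_0+\eta_0$, while the target contains no $\theta$-dependence.

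\textbf{Step 1: expand in $\eta_0$.} Expand both $M_\theta(z_0+\eta_0)$ and $\hat{\dot z}(\theta;z_0+\eta_0)$ to second order in $\eta_0$, exactly as in Eq.~\eqref{B3}, and leave the linear term $-(\eta_N-\eta_0)/\Delta s$ untouched.

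\textbf{Step 2: conditional expectation over $\eta_N$.} Use $\mathbb{E}[\eta_N|\eta_0]=a^N\eta_0$. This replaces $(\eta_N-\eta_0)/\Delta s$ by $-(1-a^N)\eta_0/\Delta s$, so the residual acquires an extra term linear in $\eta_0$ with coefficient $+(1-a^N)/\Delta s$.

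\textbf{Step 3: expectation over $\eta_0$.} Use $\mathbb{E}[\eta_{0i}]=0$ and $\mathbb{E}[\eta_{0i}\eta_{0j}]=\sigma_\infty^2\delta_{ij}$, and collect terms by order.
\begin{itemize}
\item The zeroth-order product is $2M_\theta^T(z_0)\bigl(\hat{\dot z}(z_0)-(z_N-z_0)/\Delta s\bigr)=\nabla_\theta l_{0,\text{AHNN}}$.
\item The quadratic terms that do not involve the target combine into $\sigma_\infty^2 \sum_i\bigl[\partial_i M_\theta^T\,\partial_i\hat{\dot z} + \tfrac12 \partial_i^2 M_\theta^T\,\hat{\dot z} + \tfrac12 M_\theta^T\,\partial_i^2\hat{\dot z}\bigr]$. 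Since $\nabla_\theta g = 2M_\theta^T\hat{\dot z}$, this sum is precisely $\tfrac12\nabla_\theta(\nabla_z^2 g_\theta)|_{z_0}$.
\item The second-derivative term of $M_\theta$ multiplied against the target gives $-\sigma_\infty^2(\nabla_z^2 M_\theta^T)\,(z_N-z_0)/\Delta s$.
\item The cross term, first derivative of $M_\theta$ times the new linear residual term, gives $2\sigma_\infty^2\frac{(1-a^N)}{\Delta s}\sum_i \partial_i M_\theta^T e_i = 2\frac{(1-a^N)\sigma_\infty^2}{\Delta s}\nabla_\theta(\nabla_z\cdot\hat{\dot z})|_{z_0}$. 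Here $\sum_i \partial_{z_i}\partial_\theta \hat{\dot z}_i = \nabla_\theta$ of the divergence.
\end{itemize}
The sign is positive, in contrast to the $-2a^N$ of Theorem~\ref{theorem1}, because $\eta_0$ enters the target with a minus sign. Third-order moments of a Gaussian vanish and fourth-order terms are $\mathcal{O}(\sigma_\infty^4)$, so the remainder is $\mathcal{O}(\sigma_\infty^3)$ as claimed.

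\textbf{Main obstacle.} The step needing the most care is the bookkeeping in Step 3. The $\tfrac12$ factors must recombine correctly into $\tfrac12\nabla_\theta\nabla_z^2 g$. The cross term must be correctly identified as the $\theta$-gradient of the divergence; this requires commuting $\partial_\theta$ and $\partial_z$, which is justified by the smoothness of the tanh MLP. A side remark: for a Hamiltonian vector field $\nabla_z\cdot\hat{\dot z}=0$ identically, so this term actually vanishes for an exact AHNN. I would still state it in the general form matching the theorem, and note that the $1/\Delta s$ dependence persists in the variance and in the $\mathcal{O}(\sigma_\infty^2)$ target term.
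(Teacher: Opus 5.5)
Your argument follows the paper's proof exactly, and your final expression matches the statement term by term. The steps are the same: write $\nabla_\theta l = 2M_\theta^T(z_0+\eta_0)\,r$, expand to second order in $\eta_0$, use $\mathbb{E}[\eta_N\mid\eta_0]=a^N\eta_0$ to replace $-(\eta_N-\eta_0)/\Delta s$ by $(1-a^N)\eta_0/\Delta s$, then apply the Gaussian moments.

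There is one bookkeeping slip, in your second bullet, where you dropped the overall factor $2$ of the gradient. The target-free quadratic contribution is actually $\sigma_\infty^2\sum_i\bigl[2\,\partial_i M_\theta^T\partial_i\hat{\dot z}+\partial_i^2 M_\theta^T\hat{\dot z}+M_\theta^T\partial_i^2\hat{\dot z}\bigr]=\sigma_\infty^2\nabla_z^2\bigl(M_\theta^T\hat{\dot z}\bigr)=\tfrac12\sigma_\infty^2\nabla_\theta(\nabla_z^2 g_\theta)$. The sum you wrote equals $\tfrac14\nabla_\theta(\nabla_z^2 g_\theta)$, not $\tfrac12\nabla_\theta(\nabla_z^2 g_\theta)$. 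Your identification is off by the same factor, so the two errors cancel. Your first, third and fourth bullets carry the factor $2$ correctly.

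Your side remark is correct, and the paper misses it. Since $\hat{\dot z}=(\partial_p\mathcal{H}_\theta,-\partial_q\mathcal{H}_\theta)$ is a Hamiltonian vector field, $\nabla_z\cdot\hat{\dot z}\equiv 0$ for every $\theta$, so the $(1-a^N)/\Delta s$ term in the statement vanishes identically. In fact Gaussian integration by parts gives $\mathbb{E}[M_\theta^T(z_0+\eta_0)\,\eta_0]=\sigma_\infty^2\,\mathbb{E}[\nabla_\theta(\nabla_z\cdot\hat{\dot z})(z_0+\eta_0)]=0$ exactly. The noise-correlation coupling therefore drops out of the mean gradient at every order, not just at $\mathcal{O}(\sigma_\infty^2)$. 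The theorem remains true as stated, but it does not support the paper's closing interpretation that the expected AHNN gradient scales inversely with $\Delta s$.

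Be careful with your own last sentence as well. In the target term, $(z_N-z_0)/\Delta s$ is a difference quotient of the clean trajectory and stays bounded as $\Delta s\to 0$. The genuine $1/\Delta s$ effect is therefore in the variance of the gradient, not in its mean.
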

\begin{proof}
    The proof follows the same procedure as Theorem \ref{theorem1}: \\

    Let $\hat{\dot{z}}(\theta;z) = \left(\partial_p \mathcal{H}_{\theta} (z;\lambda), -\partial_q \mathcal{H}_{\theta} (z;\lambda) \right)$ denote the AHNN vector field with model parameters $\theta$. Let the initial (noiseless) observation be $z_0$, and correlated, noisy observations $z_N+\eta_N$ where $\eta_N | \eta_0 \sim \mathcal{N}(a^N \eta_0, \sigma_{\infty}^2 (1-a^{2N}) I_d)$. The per sample loss is given by 
    \begin{equation}
        l(\theta;z_0, z_N) = \Bigg|\Bigg|\hat{\dot{z}}(\theta;z_0 + \eta_0) - \frac{(z_N + \eta_N) - (z_0 + \eta_0)}{\Delta s}\Bigg|\Bigg|^2
    \end{equation}
    and the gradient may be written as
    \begin{equation}
        \nabla_{\theta} l = 2 \frac{\partial \hat{\dot{z}}^T}{\partial \theta} (\theta ; z_0+\eta_0) \left[\hat{\dot{z}} (\theta;z_0 + \eta_0) - \frac{z_N + \eta_N - z_0 - \eta_0}{\Delta s} \right]
    \end{equation}
    Defining $F_\theta(z) = \nabla_{\theta}\hat{\dot{z}}|_z$ and expanding to second order in $\eta_0$ yields
    \begin{multline}
        \nabla_{\theta} l  
            =  2 \Biggr[F_\theta(z_0) + \sum_i \eta_{0i}\frac{\partial F_\theta}{\partial z_i}  (z_0)\\
            +\frac{1}{2} \sum_{i,j} \eta_{0i} \eta_{0j} \frac{\partial F_\theta}{\partial z_i \partial z_j} (z_0)\Biggr]^T \Biggr[  \hat{\dot{z}} (\theta;z_0) +  \sum_i \eta_{0i}\frac{\partial \hat{\dot{z}}}{\partial z_i}  (z_0)\\
            +\frac{1}{2} \sum_{i,j} \eta_{0i} \eta_{0j} \frac{\partial \hat{\dot{z}}}{\partial z_i \partial z_j} (z_0)  - \frac{z_N - z_0}{\Delta s} - \frac{\eta_N- \eta_0}{\Delta s}\Biggr] + \mathcal{O}(||\eta_0||^3)
            \label{B3}
    \end{multline}
    With $\mathbb{E}[\eta_{0i}] = 0$, $\mathbb{E}[\eta_{0i} \eta_{0j}] = \sigma_{\infty}^2 \delta_{ij}$ and $\mathbb{E}[\eta_N|\eta_0] = a^N\eta_0$, the expected loss gradient may be written as
    \begin{multline}
        \mathbb{E}_{\eta_N,\eta_0} \left[\nabla_{\theta} l (\theta; z_0, \eta_0, \eta_N \right] = \mathbb{E}_{\eta_0}\left[\mathbb{E}_{\eta_N|\eta_0} \left[\nabla_{\theta} l \right]\right]= \nabla_{\theta}l_{0,\text{AHNN}} +\\ \sigma_{\infty}^2 \left [ \frac{1}{2} \nabla_{\theta} (\nabla_z^2 g_{\theta})|_{z_0} - (\nabla_z^2 F_{\theta}(z_0)) \frac{z_N - z_0}{\Delta s} \right]\\ + 2\frac{(1-a^N) \sigma_{\infty}^2}{\Delta s}\nabla_{\theta}(\nabla_z \cdot \hat{\dot{z}}) |_{z_0} + \mathcal{O}(\sigma_{\infty}^3)
    \end{multline}
    Here we have identified $\nabla_{\theta, \text{AHNN}} l_0 = 2F_\theta^T(z_0) (\hat{\dot{z}}(\theta;z_0) -\frac{z_N-z_0}{\Delta s})$ as the noise-free gradient.
\end{proof}

We can observe that in contrast to Theorem~\ref{theorem1}, the expected gradient for AHNNs scales inversely with the sampling interval thus making such models an impractical choice for sparse, noisy data.

\bibliography{apssamp}% Produces the bibliography via BibTeX.

@article{SciNet, title={Discovering physical concepts with neural networks}, volume={124}, ISSN={0031-9007, 1079-7114}, DOI={10.1103/PhysRevLett.124.010508}, abstractNote={Despite the success of neural networks at solving concrete physics problems, their use as a general-purpose tool for scientific discovery is still in its infancy. Here, we approach this problem by modelling a neural network architecture after the human physical reasoning process, which has similarities to representation learning. This allows us to make progress towards the long-term goal of machine-assisted scientific discovery from experimental data without making prior assumptions about the system. We apply this method to toy examples and show that the network finds the physically relevant parameters, exploits conservation laws to make predictions, and can help to gain conceptual insights, e.g. Copernicus’ conclusion that the solar system is heliocentric.}, number={1}, journal={Physical Review Letters}, author={Iten, Raban and Metger, Tony and Wilming, Henrik and del Rio, Lidia and Renner, Renato}, year={2020}, month=jan, pages={010508} }

@article{PINNsReview, title={Physics-informed machine learning},
journal = {Nature Reviews Physics},
DOI={10.1038/s42254-021-00314-5},
author={Karniadakis, George and Kevrekidis, Yannis and Lu, Lu and Perdikaris, Paris and Wang, Sifan and Yang, Liu}, year={2021}, volume={3}, pages={422--440}}

@article{AINewton,
doi = {10.1088/2632-2153/acfa63},
year = {2023},
month = {oct},
publisher = {IOP Publishing},
volume = {4},
number = {4},
pages = {045002},
author = {Lemos, Pablo and Jeffrey, Niall and Cranmer, Miles and Ho, Shirley and Battaglia, Peter},
title = {Rediscovering orbital mechanics with machine learning},
journal = {Machine Learning: Science and Technology}
}

@article{BayesianScientist, title={A {B}ayesian machine scientist to aid in the solution of challenging scientific problems}, volume={6}, DOI={10.1126/sciadv.aav6971}, abstractNote={Closed-form, interpretable mathematical models have been instrumental for advancing our understanding of the world; with the data revolution, we may now be in a position to uncover new such models for many systems from physics to the social sciences. However, to deal with increasing amounts of data, we need “machine scientists” that are able to extract these models automatically from data. Here, we introduce a Bayesian machine scientist, which establishes the plausibility of models using explicit approximations to the exact marginal posterior over models and establishes its prior expectations about models by learning from a large empirical corpus of mathematical expressions. It explores the space of models using Markov chain Monte Carlo. We show that this approach uncovers accurate models for synthetic and real data and provides out-of-sample predictions that are more accurate than those of existing approaches and of other nonparametric methods.}, number={5}, journal={Science Advances}, author={Guimerà, Roger and Reichardt, Ignasi and Aguilar-Mogas, Antoni and Massucci, Francesco A. and Miranda, Manuel and Pallarès, Jordi and Sales-Pardo, Marta}, year={2020}, month=jan, pages={eaav6971} }

@article{StochasticComplexSYstems, title={Learning interpretable dynamics of stochastic complex systems from experimental data}, volume={15}, rights={2024 The Author(s)}, ISSN={2041-1723}, DOI={10.1038/s41467-024-50378-x}, abstractNote={Complex systems with many interacting nodes are inherently stochastic and best described by stochastic differential equations. Despite increasing observation data, inferring these equations from empirical data remains challenging. Here, we propose the Langevin graph network approach to learn the hidden stochastic differential equations of complex networked systems, outperforming five state-of-the-art methods. We apply our approach to two real systems: bird flock movement and tau pathology diffusion in brains. The inferred equation for bird flocks closely resembles the second-order Vicsek model, providing unprecedented evidence that the Vicsek model captures genuine flocking dynamics. Moreover, our approach uncovers the governing equation for the spread of abnormal tau proteins in mouse brains, enabling early prediction of tau occupation in each brain region and revealing distinct pathology dynamics in mutant mice. By learning interpretable stochastic dynamics of complex systems, our findings open new avenues for downstream applications such as control.}, number={1}, journal={Nature Communications}, author={Gao, Ting-Ting and Barzel, Baruch and Yan, Gang}, year={2024}, month=jul, pages={6029}}

@article{AIPoincare,
  title = {Machine Learning Conservation Laws from Trajectories},
  author = {Liu, Ziming and Tegmark, Max},
  journal = {Physical Review Letters},
  volume = {126},
  issue = {18},
  pages = {180604},
  numpages = {6},
  year = {2021},
  month = {May},
  publisher = {American Physical Society},
  doi = {10.1103/PhysRevLett.126.180604},
}

@article{
AIFeynman,
author = {Silviu-Marian Udrescu  and Max Tegmark },
title = {A{I} {F}eynman: A physics-inspired method for symbolic regression},
journal = {Science Advances},
volume = {6},
number = {16},
pages = {eaay2631},
year = {2020},
doi = {10.1126/sciadv.aay2631}}

@article{wang2023scientific,
  title={Scientific discovery in the age of artificial intelligence},
  author={Wang, Hanchen and Fu, Tianfan and Du, Yuanqi and Gao, Wenhao and Huang, Kexin and Liu, Ziming and Chandak, Payal and Liu, Shengchao and Van Katwyk, Peter and Deac, Andreea and others},
  journal={Nature},
  volume={620},
  number={7972},
  pages={47--60},
  year={2023},
  publisher={Nature Publishing Group UK London},
  url = {https://doi.org/10.1038/s41586-023-06221-2}
}

@article{angelis2023artificial,
  title={Artificial intelligence in physical sciences: Symbolic regression trends and perspectives},
  author={Angelis, Dimitrios and Sofos, Filippos and Karakasidis, Theodoros E},
  journal={Archives of Computational Methods in Engineering},
  volume={30},
  number={6},
  pages={3845--3865},
  year={2023},
  publisher={Springer},
  url = {https://doi.org/10.1007/s11831-023-09922-z}
}

@article{makke2024interpretable,
  title={Interpretable scientific discovery with symbolic regression: a review},
  author={Makke, Nour and Chawla, Sanjay},
  journal={Artificial Intelligence Review},
  volume={57},
  number={1},
  pages={2},
  year={2024},
  publisher={Springer},
  url = {https://doi.org/10.1007/s10462-023-10622-0}
}

@article{xu2021artificial,
  title={Artificial intelligence: A powerful paradigm for scientific research},
  author={Xu, Yongjun and Liu, Xin and Cao, Xin and Huang, Changping and Liu, Enke and Qian, Sen and Liu, Xingchen and Wu, Yanjun and Dong, Fengliang and Qiu, Cheng-Wei and others},
  journal={The Innovation},
  volume={2},
  number={4},
  year={2021},
  publisher={Elsevier},
  url = {https://doi.org/10.1016/j.xinn.2021.100179}
}

@article{alphafold,
  title={Highly accurate protein structure prediction with {A}lpha{F}old},
  author={Jumper, John and Evans, Richard and Pritzel, Alexander and Green, Tim and Figurnov, Michael and Ronneberger, Olaf and Tunyasuvunakool, Kathryn and Bates, Russ and {\v{Z}}{\'\i}dek, Augustin and Potapenko, Anna and others},
  journal={Nature},
  volume={596},
  number={7873},
  pages={583--589},
  year={2021},
  publisher={Nature Publishing Group},
  url = {https://doi.org/10.1038/s41586-021-03819-2}
}

@article{merchant2023scaling,
  title={Scaling deep learning for materials discovery},
  author={Merchant, Amil and Batzner, Simon and Schoenholz, Samuel S and Aykol, Muratahan and Cheon, Gowoon and Cubuk, Ekin Dogus},
  journal={Nature},
  volume={624},
  number={7990},
  pages={80--85},
  year={2023},
  publisher={Nature Publishing Group UK London},
  url = {https://doi.org/10.1038/s41586-023-06735-9}
}

@article{pavone2023machine,
  title={Machine learning and {B}ayesian inference in nuclear fusion research: an overview},
  author={Pavone, A and Merlo, A and Kwak, S and Svensson, J},
  journal={Plasma Physics and Controlled Fusion},
  volume={65},
  number={5},
  pages={053001},
  year={2023},
  publisher={IOP Publishing},
  url = {https://doi.org/10.1088/1361-6587/acc60f}
}

@article{arhouni2025artificial,
  title={Artificial intelligence-driven advances in nuclear technology: Exploring innovations, applications, and future prospects},
  author={Arhouni, Fatima Ezzahra and Abdo, Maged Ahmed Saleh and Ouakkas, Saad and Bouhssa, Mohamed Lhadi and Boukhair, Aziz},
  journal={Annals of Nuclear Energy},
  volume={213},
  pages={111151},
  year={2025},
  publisher={Elsevier},
  url = {https://doi.org/10.1016/j.anucene.2024.111151}
}

@article{bracco2024machine,
  title={Machine learning for the physics of climate},
  author={Bracco, Annalisa and Brajard, Julien and Dijkstra, Henk A and Hassanzadeh, Pedram and Lessig, Christian and Monteleoni, Claire},
  journal={Nature Reviews Physics},
  volume = {7},
  pages={6--20},
  year={2025},
  publisher={Nature Publishing Group UK London},
  url = {https://doi.org/10.1038/s42254-024-00776-3}
}

@article{lai2024machine,
  title={Machine learning for climate physics and simulations},
  author={Lai, Ching-Yao and Hassanzadeh, Pedram and Sheshadri, Aditi and Sonnewald, Maike and Ferrari, Raffaele and Balaji, Venkatramani},
  journal={Annual Review of Condensed Matter Physics},
  volume={16},
  pages = {343--365},
  year={2024},
  publisher={Annual Reviews},
  url = {https://doi.org/10.1146/annurev-conmatphys-043024-114758}
}

@article{
graphcast,
author = {Remi Lam  and Alvaro Sanchez-Gonzalez  and Matthew Willson  and Peter Wirnsberger  and Meire Fortunato  and Ferran Alet  and Suman Ravuri  and Timo Ewalds  and Zach Eaton-Rosen  and Weihua Hu  and Alexander Merose  and Stephan Hoyer  and George Holland  and Oriol Vinyals  and Jacklynn Stott  and Alexander Pritzel  and Shakir Mohamed  and Peter Battaglia },
title = {Learning skillful medium-range global weather forecasting},
journal = {Science},
volume = {382},
number = {6677},
pages = {1416-1421},
year = {2023},
doi = {10.1126/science.adi2336}}

@article{eyring2024pushing,
  title={Pushing the frontiers in climate modelling and analysis with machine learning},
  author={Eyring, Veronika and Collins, William D and Gentine, Pierre and Barnes, Elizabeth A and Barreiro, Marcelo and Beucler, Tom and Bocquet, Marc and Bretherton, Christopher S and Christensen, Hannah M and Dagon, Katherine and others},
  journal={Nature Climate Change},
  volume={14},
  number={9},
  pages={916--928},
  year={2024},
  publisher={Nature Publishing Group UK London},
  url = {https://doi.org/10.1038/s41558-024-02095-y}
}

@article{kochkov2021machine,
  title={Machine learning--accelerated computational fluid dynamics},
  author={Kochkov, Dmitrii and Smith, Jamie A and Alieva, Ayya and Wang, Qing and Brenner, Michael P and Hoyer, Stephan},
  journal={Proceedings of the National Academy of Sciences},
  volume={118},
  number={21},
  pages={e2101784118},
  year={2021},
  publisher={National Acad Sciences},
  url = {https://doi.org/10.1073/pnas.2101784118}
}

@article{vinuesa2022enhancing,
  title={Enhancing computational fluid dynamics with machine learning},
  author={Vinuesa, Ricardo and Brunton, Steven L},
  journal={Nature Computational Science},
  volume={2},
  number={6},
  pages={358--366},
  year={2022},
  publisher={Nature Publishing Group US New York},
  url = {https://doi.org/10.1038/s43588-022-00264-7}
}

@article{brunton2024promising,
  title={Promising directions of machine learning for partial differential equations},
  author={Brunton, Steven L and Kutz, J Nathan},
  journal={Nature Computational Science},
  volume={4},
  number={7},
  pages={483--494},
  year={2024},
  publisher={Nature Publishing Group US New York},
  url = {https://doi.org/10.1038/s43588-024-00643-2}
}

@article{pravsnikar2024machine,
  title={Machine learning heralding a new development phase in molecular dynamics simulations},
  author={Pra{\v{s}}nikar, Eva and Ljubi{\v{c}}, Martin and Perdih, Andrej and Bori{\v{s}}ek, Jure},
  journal={Artificial intelligence review},
  volume={57},
  number={4},
  pages={102},
  year={2024},
  publisher={Springer},
  url = {https://doi.org/10.1007/s10462-024-10731-4}
}

@Article{toscano2024pinnspikansrecentadvances,
author={Toscano, Juan Diego
and Oommen, Vivek
and Varghese, Alan John
and Zou, Zongren
and Ahmadi Daryakenari, Nazanin
and Wu, Chenxi
and Karniadakis, George Em},
title={From {PINNs} to {PIKAN}s: {R}ecent advances in physics-informed machine learning},
journal={Machine Learning for Computational Science and Engineering},
year={2025},
month={Mar},
day={11},
volume={1},
number={1},
pages={15},
issn={3005-1436},
doi={10.1007/s44379-025-00015-1},
url={https://doi.org/10.1007/s44379-025-00015-1}
}

@book{goldstein2011classical,
  author    = {Herbert Goldstein and Charles P. Poole and John L. Safko},
  title     = {Classical Mechanics},
  edition   = {3rd},
  publisher = {Addison Wesley},
  year      = {2002},
  address   = {San Francisco, CA, USA},
  isbn      = {0-201-65702-3, 978-0-201-65702-9}
}

@inproceedings{greydanus2019hamiltonian,
 author = {Greydanus, Samuel and Dzamba, Misko and Yosinski, Jason},
 booktitle = {Advances in Neural Information Processing Systems},
 title = {Hamiltonian Neural Networks},
 url = {https://proceedings.neurips.cc/paper_files/paper/2019/file/26cd8ecadce0d4efd6cc8a8725cbd1f8-Paper.pdf},
 volume = {32},
 year = {2019}
}

@article{Han_Glaz_Haile_Lai_2021, title={Adaptable {H}amiltonian neural networks}, volume={3}, ISSN={2643-1564}, DOI={10.1103/PhysRevResearch.3.023156}, abstractNote={The rapid growth of research in exploiting machine learning to predict chaotic systems has revived a recent interest in Hamiltonian Neural Networks (HNNs) with physical constraints defined by the Hamilton’s equations of motion, which represent a major class of physics-enhanced neural networks. We introduce a class of HNNs capable of adaptable prediction of nonlinear physical systems: by training the neural network based on time series from a small number of bifurcation-parameter values of the target Hamiltonian system, the HNN can predict the dynamical states at other parameter values, where the network has not been exposed to any information about the system at these parameter values. The architecture of the HNN differs from the previous ones in that we incorporate an input parameter channel, rendering the HNN parameter--cognizant. We demonstrate, using paradigmatic Hamiltonian systems, that training the HNN using time series from as few as four parameter values bestows the neural machine with the ability to predict the state of the target system in an entire parameter interval. Utilizing the ensemble maximum Lyapunov exponent and the alignment index as indicators, we show that our parameter-cognizant HNN can successfully predict the route of transition to chaos. Physics-enhanced machine learning is a forefront area of research, and our adaptable HNNs provide an approach to understanding machine learning with broad applications.}, number={2}, journal={Physical Review Research}, author={Han, Chen-Di and Glaz, Bryan and Haile, Mulugeta and Lai, Ying-Cheng}, year={2021}, month=may, pages={023156} }

@inproceedings{
Chen2020Symplectic,
title={Symplectic Recurrent Neural Networks},
author={Zhengdao Chen and Jianyu Zhang and Martin Arjovsky and Léon Bottou},
booktitle={International Conference on Learning Representations},
year={2020},
url = {https://openreview.net/forum?id=BkgYPREtPr}
}

@inproceedings{
xiong2022nonseparablesymplecticneuralnetworks,
title={Nonseparable Symplectic Neural Networks},
author={Shiying Xiong and Yunjin Tong and Xingzhe He and Shuqi Yang and Cheng Yang and Bo Zhu},
booktitle={International Conference on Learning Representations},
year={2021},
url={https://openreview.net/forum?id=B5VvQrI49Pa}
}

@inproceedings{
cranmer2020lagrangianneuralnetworks,
title={Lagrangian Neural Networks},
author={Miles Cranmer and Sam Greydanus and Stephan Hoyer and Peter Battaglia and David Spergel and Shirley Ho},
booktitle={ICLR 2020 Workshop on Integration of Deep Neural Models and Differential Equations},
year={2020},
url={https://openreview.net/forum?id=iE8tFa4Nq}
}

@article{orderandchaos,
  title = {Physics-enhanced neural networks learn order and chaos},
  author = {Choudhary, Anshul and Lindner, John F. and Holliday, Elliott G. and Miller, Scott T. and Sinha, Sudeshna and Ditto, William L.},
  journal = {Physical Review E},
  volume = {101},
  issue = {6},
  pages = {062207},
  numpages = {8},
  year = {2020},
  month = {Jun},
  publisher = {American Physical Society},
  doi = {10.1103/PhysRevE.101.062207},
}

@article{choudhary2021forecasting,
  title={Forecasting {H}amiltonian dynamics without canonical coordinates},
  author={Choudhary, Anshul and Lindner, John F and Holliday, Elliott G and Miller, Scott T and Sinha, Sudeshna and Ditto, William L},
  journal={Nonlinear Dynamics},
  volume={103},
  pages={1553--1562},
  year={2021},
  publisher={Springer},
  url = {https://doi.org/10.1007/s11071-020-06185-2}
}

@inproceedings{contraints,
 author = {Finzi, Marc and Wang, Ke Alexander and Wilson, Andrew G},
 booktitle = {Advances in Neural Information Processing Systems},
 title = {Simplifying {H}amiltonian and {L}agrangian Neural Networks via Explicit Constraints},
 volume = {33},
 year = {2020},
 url = {https://proceedings.neurips.cc/paper/2020/file/9f655cc8884fda7ad6d8a6fb15cc001e-Paper.pdf}
}

@article{sosanya2022dissipative,
  title={Dissipative {H}amiltonian neural networks: Learning dissipative and conservative dynamics separately},
  author={Sosanya, Andrew and Greydanus, Sam},
  journal={arXiv preprint arXiv:2201.10085},
  year={2022},
  url = {https://doi.org/10.48550/arXiv.2201.10085}
}

@article{SympNets,
title = {{S}ymp{N}ets: Intrinsic structure-preserving symplectic networks for identifying {H}amiltonian systems},
journal = {Neural Networks},
volume = {132},
pages = {166-179},
year = {2020},
issn = {0893-6080},
doi = {https://doi.org/10.1016/j.neunet.2020.08.017},
author = {Pengzhan Jin and Zhen Zhang and Aiqing Zhu and Yifa Tang and George Em Karniadakis}
}

@article{HORN2025113536,
title = {A generalized framework of neural networks for {H}amiltonian systems},
journal = {Journal of Computational Physics},
volume = {521},
pages = {113536},
year = {2025},
issn = {0021-9991},
doi = {https://doi.org/10.1016/j.jcp.2024.113536},
author = {Philipp Horn and Veronica {Saz Ulibarrena} and Barry Koren and Simon {Portegies Zwart}}
}

@article{allen2023interpretable,
  title={Interpretable machine learning for discovery: Statistical challenges and opportunities},
  author={Allen, Genevera I and Gan, Luqin and Zheng, Lili},
  journal={Annual Review of Statistics and its Application},
  volume={11},
  pages = {97--121},
  year={2023},
  publisher={Annual Reviews},
  url = {https://doi.org/10.1146/annurev-statistics-040120-030919}
}

@article{brunton2016discovering,
  title={Discovering governing equations from data by sparse identification of nonlinear dynamical systems},
  author={Brunton, Steven L and Proctor, Joshua L and Kutz, J Nathan},
  journal={Proceedings of the National Academy of Sciences},
  volume={113},
  number={15},
  pages={3932--3937},
  year={2016},
  publisher={National Acad Sciences},
  url = {https://doi.org/10.1073/pnas.1517384113}
}

@article{cranmer2023interpretable,
  title={Interpretable machine learning for science with {P}y{SR} and {S}ymbolic{R}egression.jl},
  author={Cranmer, Miles},
  journal={arXiv preprint arXiv:2305.01582},
  year={2023},
  url = {https://doi.org/10.48550/arXiv.2305.01582}
}

@article{LBFGS,
  title={On the limited memory {BFGS} method for large scale optimization},
  author={Liu, Dong C and Nocedal, Jorge},
  journal={Mathematical programming},
  volume={45},
  number={1},
  pages={503--528},
  year={1989},
  publisher={Springer},
  url = {https://doi.org/10.1007/BF01589116}
}

@INPROCEEDINGS{hsin2024symbolicregressionsparsenoisy,
  author={Hsin, Junette and Agarwal, Shubhankar and Thorpe, Adam and Sentis, Luis and Fridovich-Keil, David},
  booktitle={2025 American Control Conference (ACC)}, 
  title={Symbolic Regression on Sparse and Noisy Data with {G}aussian Processes}, 
  year={2025},
  volume={},
  number={},
  doi={10.23919/ACC63710.2025.11107978}}

@inproceedings{SSINs,
 author = {DiPietro, Daniel and Xiong, Shiying and Zhu, Bo},
 booktitle = {Advances in Neural Information Processing Systems},
 title = {Sparse Symplectically Integrated Neural Networks},
 url = {https://proceedings.neurips.cc/paper_files/paper/2020/file/439fca360bc99c315c5882c4432ae7a4-Paper.pdf},
 volume = {33},
 year = {2020}
}

@article{OUprocess,
  title = {On the Theory of the {B}rownian Motion},
  author = {Uhlenbeck, G. E. and Ornstein, L. S.},
  journal = {Physical Review},
  volume = {36},
  issue = {5},
  pages = {823--841},
  numpages = {0},
  year = {1930},
  month = {Sep},
  publisher = {American Physical Society},
  doi = {10.1103/PhysRev.36.823},
}

@inproceedings{
xu2021how,
title={How Neural Networks Extrapolate: From Feedforward to Graph Neural Networks},
author={Keyulu Xu and Mozhi Zhang and Jingling Li and Simon Shaolei Du and Ken-Ichi Kawarabayashi and Stefanie Jegelka},
booktitle={International Conference on Learning Representations},
year={2021},
url={https://openreview.net/forum?id=UH-cmocLJC}
}

\end{document}